\newtheorem{theorem}{Theorem}
\begin{document}
\makeatletter
\let\put@numberlines@box\relax
\makeatother
% \pagenumbering{gobble}
% \pagestyle{empty} 　　　　%第二頁以後頁碼空白
% \thispagestyle{empty} 　　　%首頁頁碼空白
\begin{frontmatter}

%\pretitle{}
\title{Learning transferable and discriminative features for unsupervised domain adaptation}
\runningtitle{Learning transferable and discriminative features for unsupervised domain adaptation}
%\subtitle{}

% For one author:
%\author{\inits{N.}\fnms{Name1} \snm{Surname1}\ead[label=e1]{first@somewhere.com}}
%\address{Department first, \orgname{University or Company name},
%Abbreviate US states, \cny{Country}\printead[presep={\\}]{e1}}
%\runningauthor{N. Surname1}

% Two or more authors:
% \author[A]{\inits{N.}\fnms{Name1} \snm{Surname1}\ead[label=e1]{first@somewhere.com}%
% \thanks{Corresponding author. \printead{e1}.}},
% \author[B]{\inits{N.N.}\fnms{Name2 Name2} \snm{Surname2}\ead[label=e2]{second@somewhere.com}}
% and
% \author[B]{\inits{N.-N.}\fnms{Name3-Name3} \snm{Surname3}\ead[label=e3]{third@somewhere.com}}
% \runningauthor{N. Surname1 et al.}
% \address[A]{Department first, \orgname{University or Company name},
% Abbreviate US states, \cny{Country}\printead[presep={\\}]{e1}}
% \address[B]{Department first, \orgname{University or Company name},
% Abbreviate US states, \cny{Country}\printead[presep={\\}]{e2,e3}}

\author[A]{\inits{N.}\fnms{Yuntao Du} \ead[label=e1]{duyuntao@smail.nju.edu.cn}}%
and
\author[A]{\inits{N.N.}\fnms{Ruiting Zhang} \ead[label=e2]{rayting1111@gmail.com}}
and
\author[A]{\inits{N.-N.}\fnms{Xiaowen Zhang} \ead[label=e3]{zhangxw@smail.nju.edu.cn}}
and
\author[A]{\inits{N.N.}\fnms{Yirong Yao} \ead[label=e4]{yaotx@smail.nju.edu.cn}}
and
\author[A,B]{\inits{N.-N.}\fnms{Hengyang Lu} \ead[label=e5]{ luhengyang@jiangnan.edu.cn}}
and
\author[A]{\inits{N.-N.}\fnms{Chongjun Wang} \ead[label=e6]{chjwang@nju.edu.cn} \thanks{Corresponding author. \printead{e6}.}}
\runningauthor{Yuntao Du et al.}
\address[A]{
	State Key Laboratory for Novel Software Technology, \orgname{Nanjing University},
Nanjing 210023, \cny{China}\printead[presep={\\}]{e1,e2,e3,e4,e6}}
\address[B]{School of Artificial Intelligence and Computer Science, \orgname{Jiangnan University}, Wuxi, \cny{China}\printead[presep={\\}]{e5}}

\begin{abstract}
	Although achieving remarkable progress, it is very difficult to induce a supervised classifier without any labeled data. Unsupervised domain adaptation is able to overcome this challenge by transferring knowledge from a labeled source domain to an unlabeled target domain. Transferability and discriminability are two key criteria for characterizing the superiority of feature representations to enable successful domain adaptation. In this paper, a novel method called \textit{learning TransFerable and Discriminative Features for unsupervised domain adaptation} (TFDF) is proposed to optimize these two objectives simultaneously. On the one hand, distribution alignment is performed to reduce domain discrepancy and learn more transferable representations. Instead of adopting \textit{Maximum Mean Discrepancy} (MMD) which only captures the first-order statistical information  to measure distribution discrepancy, we adopt a recently proposed statistic called \textit{Maximum Mean and Covariance Discrepancy} (MMCD), which can not only capture the first-order statistical information  but also capture the second-order statistical information  in the reproducing kernel Hilbert space (RKHS). On the other hand, we propose to explore both local discriminative information via manifold regularization and global discriminative information via minimizing the proposed \textit{class confusion} objective to learn more discriminative features, respectively. We integrate these two objectives into the \textit{Structural Risk Minimization} (RSM) framework and learn a domain-invariant classifier. Comprehensive experiments are conducted on five real-world datasets and the results verify the effectiveness of the proposed method.
\end{abstract}

\begin{keyword}
    \kwd{Transfer learning}
    \kwd{Unsupervised domain adaptation}
    \kwd{Discriminative feature}
\end{keyword}

\end{frontmatter}

%%%%%%%%%%% The article body starts:

\section{Introduction}

Supervised learning has achieved remarkable progress in many fields with the help of a large number of labeled training samples \cite{Wu2007Top1A}. However, when there are few and even no labeled samples, it is difficult to, if not impossible, induce a supervised classifier. Rather, there is a need for versatile algorithms that reduce the need for large labeled  datasets across multiple domains. Unsupervised domain adaptation address this need by transferring knowledge from a different but related domain (source domain) with labeled samples to a target domain with unlabeled samples to improve the performance of the target domain \cite{ref_5}.  For example, an object classification model trained on manually annotated images may not generalize well to new images obtained under substantial variations in pose, occlusion, or light. Domain adaptation aims to enable knowledge transfer from the labeled source domain to the unlabeled target domain by exploring domain-invariant features that bridge different domains \cite{ref_8}.

\begin{figure*}[tbp]
	\centering
	\subfigure[Source by source-only model]{
	\includegraphics[width = 0.43\textwidth]{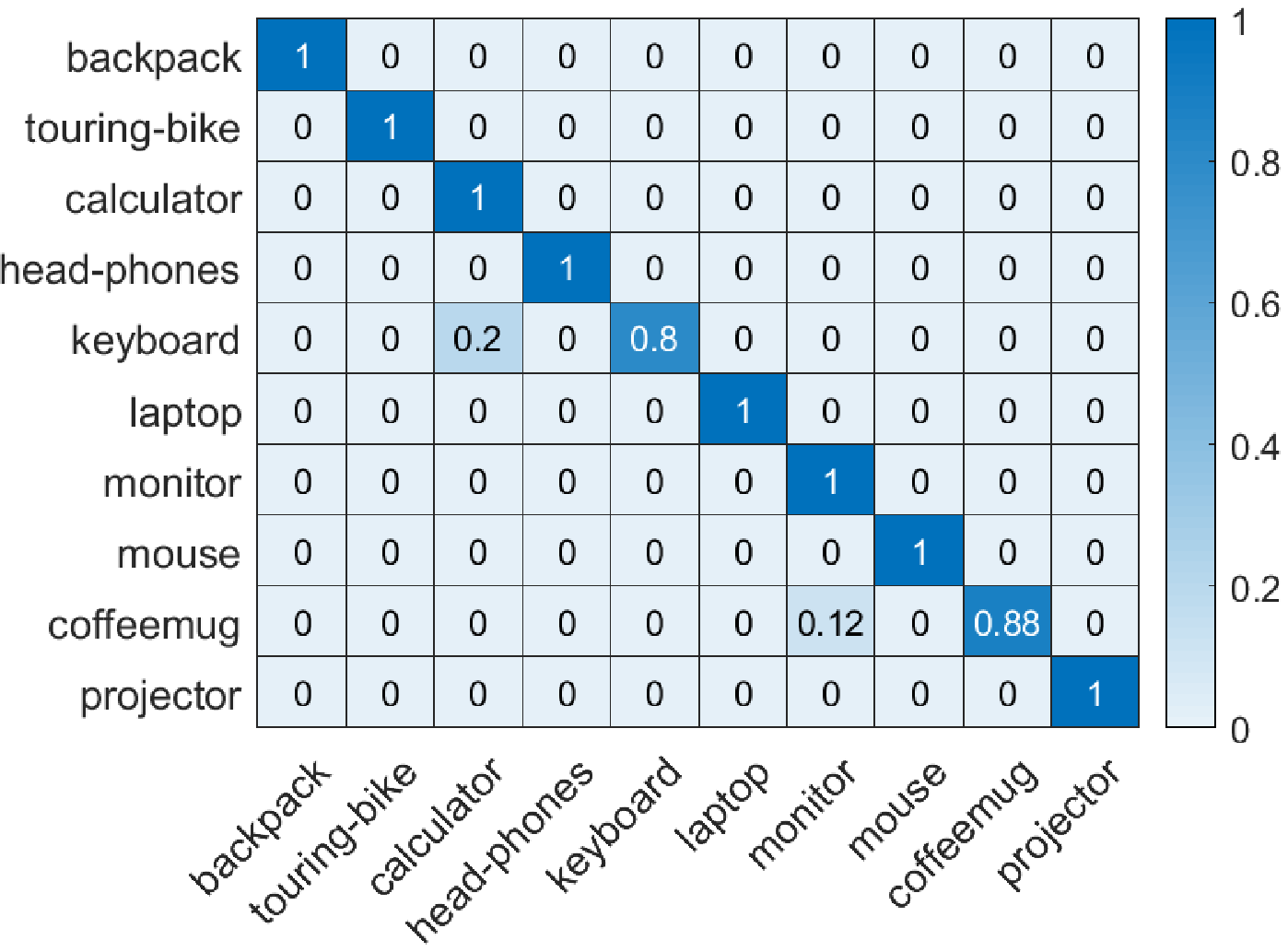}
	}
	\subfigure[Target by source-only model]{
	\label{fig:subfig:a}
	\includegraphics[width = 0.43\textwidth]{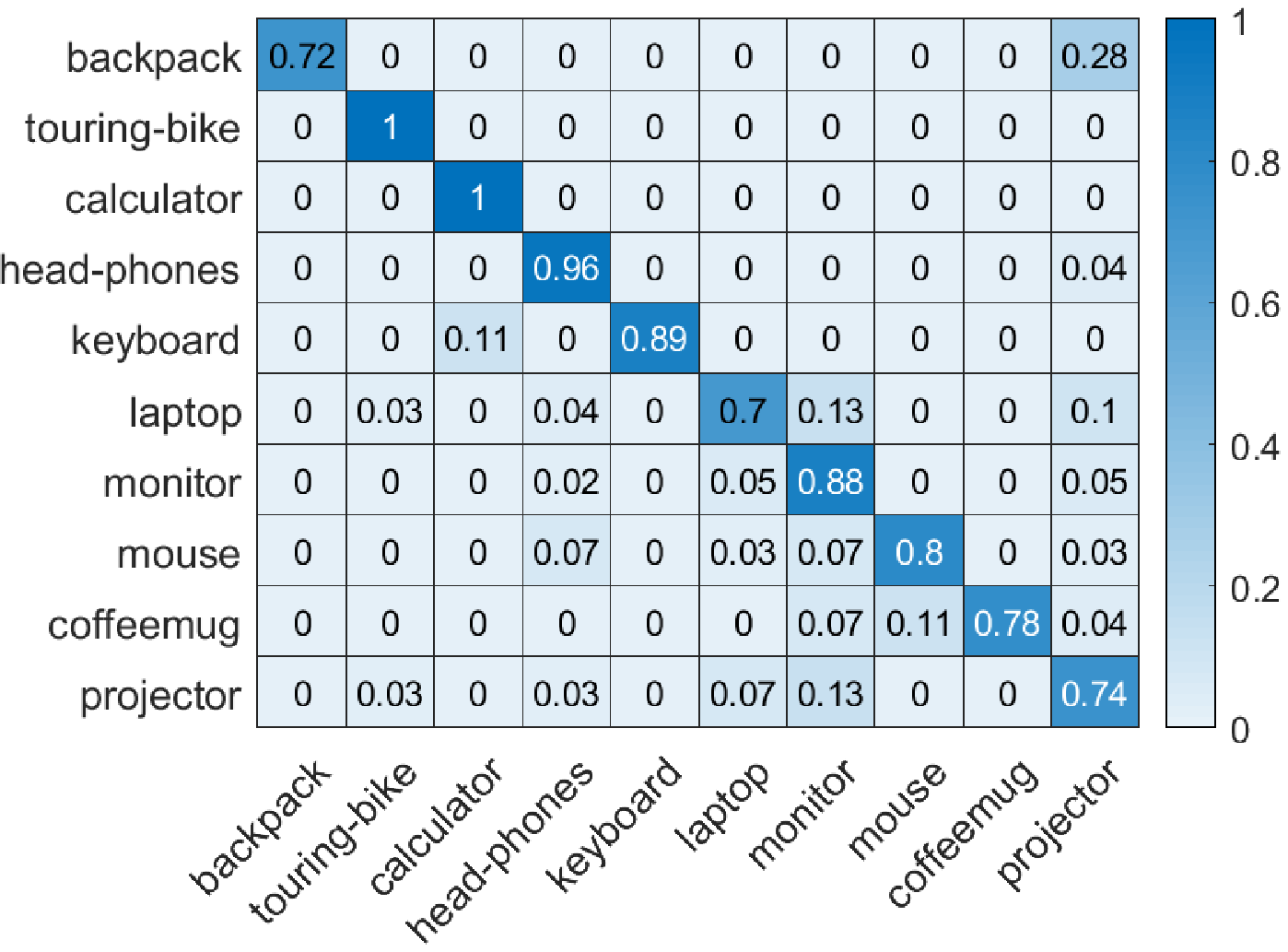}
	}
	\subfigure[target by MEDA \cite{ref_10}]{
	\includegraphics[width = 0.43\textwidth]{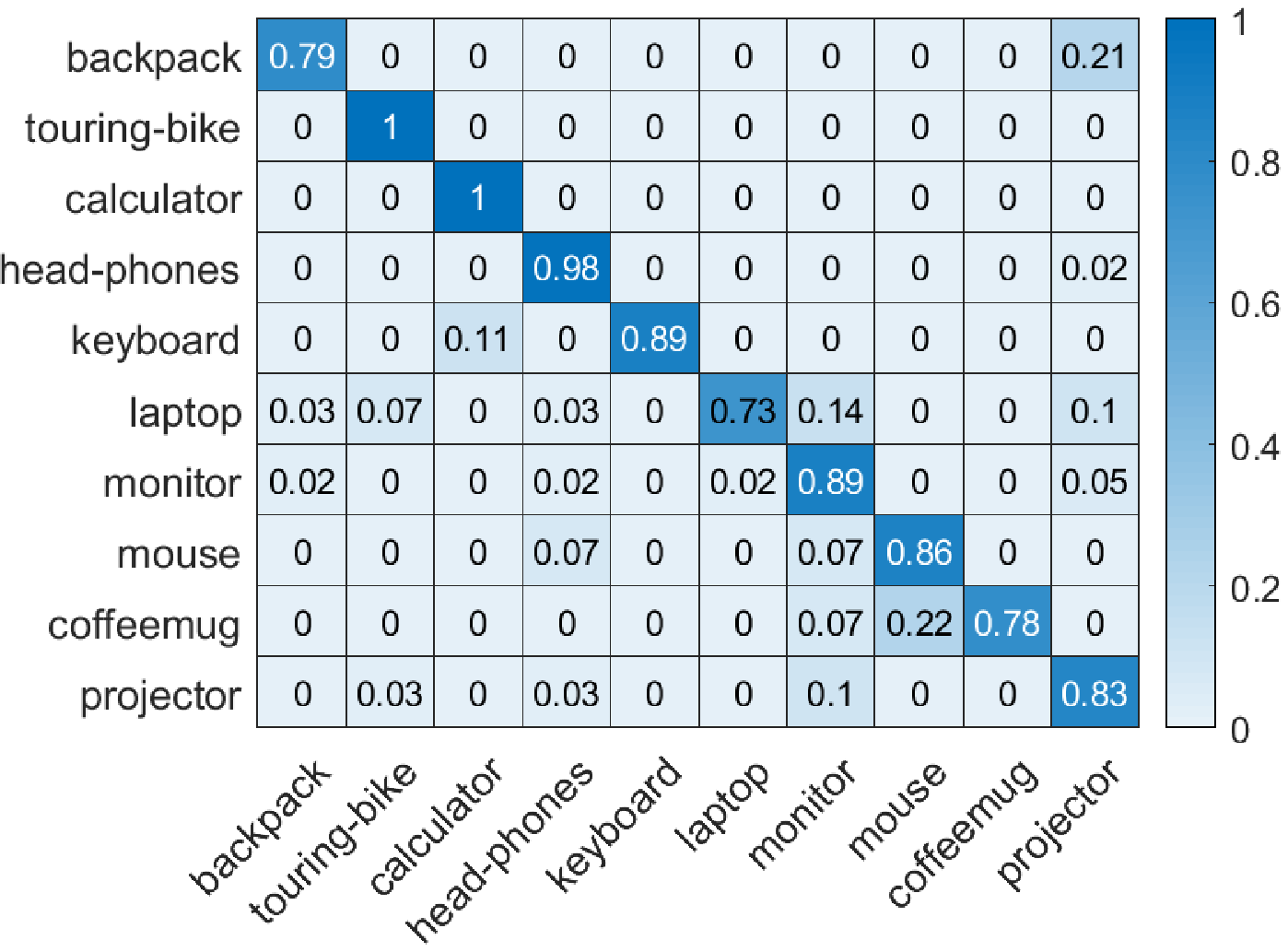}
	}
	\subfigure[target by TFDF]{
	\label{fig:subfig:a}
	\includegraphics[width = 0.43\textwidth]{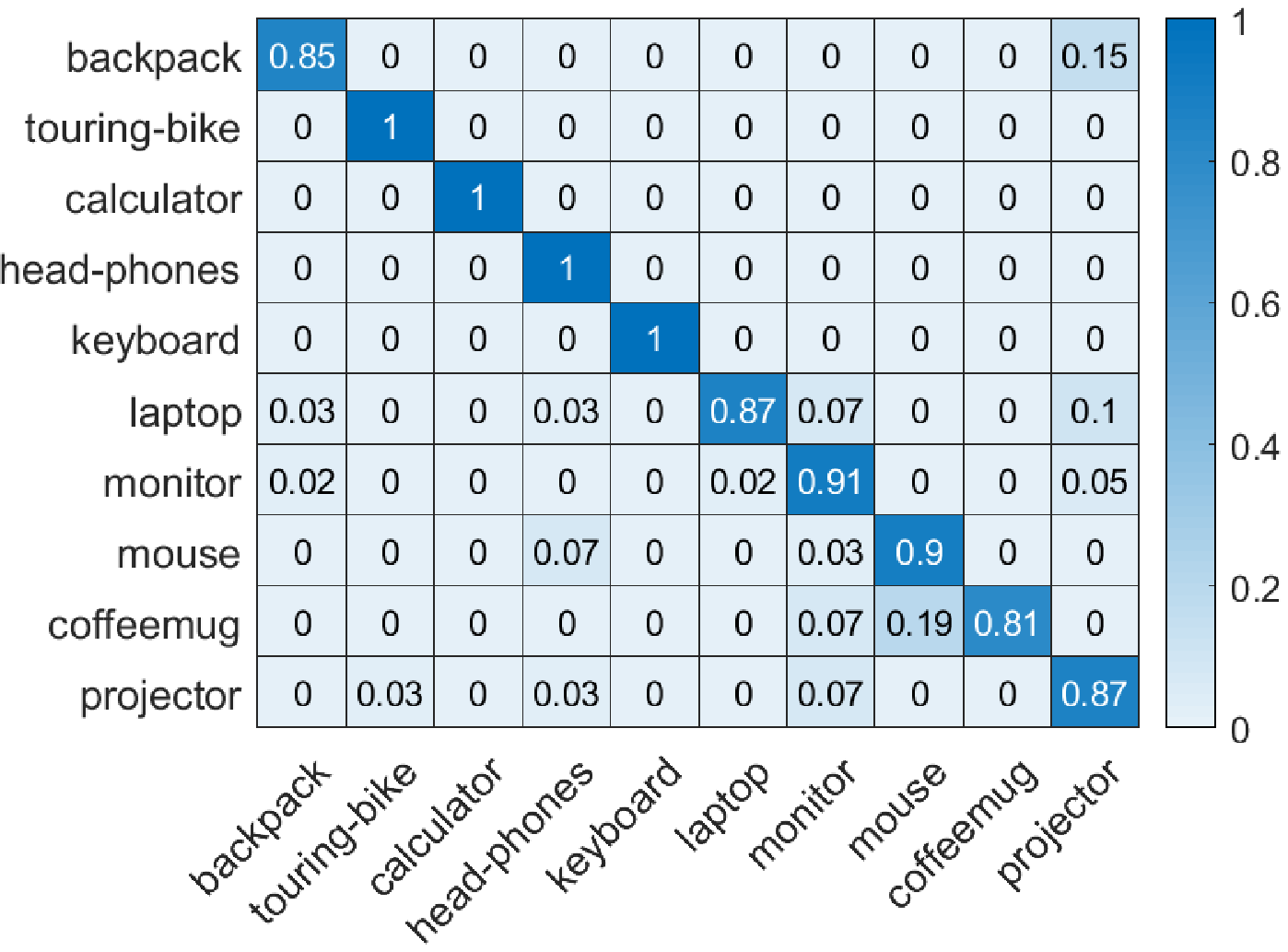}
	}
	\caption{The error matrix of different methods on tasks D $\rightarrow$ W of Offcie-Caltech dataset. Source-only model means the model trained with only labled data in the source domain. (a) Source-only model tested in the source domain (b) Source-only model tested in the target domain, (c) Model trained with MEDA (d) Model trained with TFDF. The results show that TFDF can effevtively avoid class confusion in the target domain and performs much better than other methods.}
	\label{exam}
\end{figure*}

Transferability and discriminability are two key criteria that characterize the superiority of feature representations to enable domain adaptation \cite{ref_30,ref_24,ref_8,ref_9,ref_10}. The transferability indicates the ability of feature representations to bridge the discrepancy across domains, and we can effectively transfer a learning model from the source domain to the target domain via the transferable feature representations \cite{ref_8,ref_9,ref_10}. Discriminability refers to the ability to separate different categories easily by a supervised classifier trained on the feature representations, and the model can achieve better classification performance via the discriminative feature representations \cite{ref_24,ref_30}.

Since the source samples and target samples are drawn from different distributions, it is important to reduce the distribution discrepancy across domains to learn transferable features. The mostly used shallow domain adaptation approaches include instance reweighting \cite{ref_6,ref_7} and distribution alignment \cite{ref_8,ref_9,ref_10}. The former assumes that a certain portion of the samples in the source domain can be reused for learning in the target domain and the samples from the source domain can be reweighted according to the relevance to the target domain. While the latter assumes that there exists a common space where the distributions of two domains are similar and focus on finding a feature transformation that projects features of two domains into another common subspace with less distribution discrepancy \cite{ref_8,ref_9,ref_10}. \textit{Maximum Mean Discrepancy} (MMD) \cite{ref_11} based methods are popular methods for distribution alignment, where the MMD distance is used to evaluate the distribution discrepancy across domains.

While achieving remarkable progress, the experiments in \cite{ref_30} indicate that previous domain adaptation methods tend to enhance the transferability at the expense of deteriorating the discriminability. Thus, some methods, including geometrical based methods \cite{ref_26} and manifold regularization based methods \cite{b33,ref_24}, also aim to improve the discriminability of the feature representations. Geometrical based methods consider the geometric divergence between both domains or the variance information in the target domain. Manifold regularization based methods are inspired by \textit{manifold assumption} \cite{ref_19}, which can make the predicted label of a certain sample consistent with its neighbor samples.

\begin{figure*}[tbp]
	\centering
	\includegraphics[width = 0.8\textwidth]{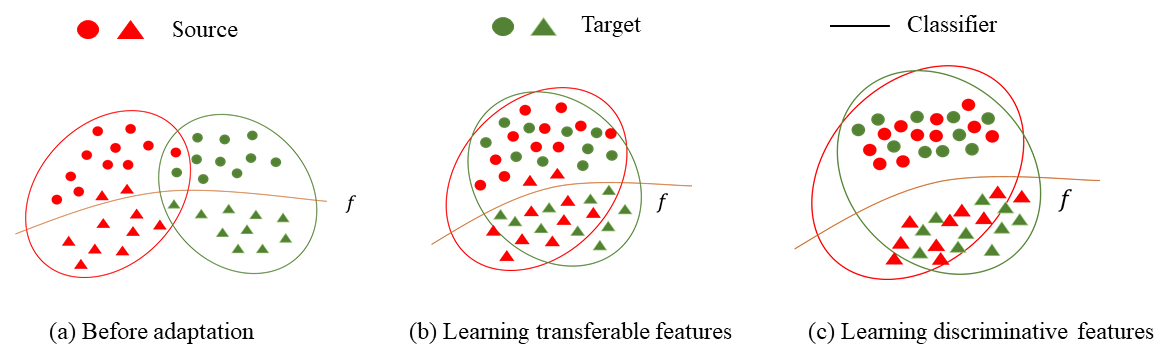}
	\caption{An overview of the proposed method. (a) Before adaptation, the source classifier can not perform well in the target domain. (b)-(c) After learning transferable and discriminative features, the domain discrepancy can be reduced and the samples can be classified correctly.}
	\label{overview11}
\end{figure*}

However, there are two issues with the existing methods. (1) To learn transferable features, MMD distance is a widely used statistic to measure the distribution discrepancy by kernel mean embedding of distributions. However, MMD distance only measures the first-order statistic of different distributions in reproducing kernel Hilbert space (RKHS). Some recent experiments have revealed that the second-order statistic (such as CORAL \cite{ref_12}) is also important to capture useful information for evaluating distribution discrepancy, which is ignored by many methods. (2) To learn discriminative features, previous methods mainly focus on local discriminative information (i.e., sample-level discriminative information), but ignore the global discriminative information (i.e., class-level discriminative information). For example, the classifier trained in the source domain may confuse to distinguish the correct class from a similar class \cite{ref_13}, such as backpack and video-projector. As shown in Fig \ref{exam}(a)-(b), the probability that a source-only model (only trained with labeled source data) misclassifies backpacks as video-projectors in the target domain is over 28\%. This phenomenon is named \textit{class confusion} and it reminds us that the global discriminative information should also be considered.

To overcome these issues, in this paper, we propose a novel method called \textit{learning TransFerable and Discriminative Features for unsupervised domain adaptation} (TFDF), which learns a domain-invariant classifier under the principle of \textit{Structural Risk Minimization} (SRM) to solve the above two issues simultaneously. An overview of the proposed method is shown in Fig 2. For the first issue, we adopt the recently proposed statistic called \textit{Maximum Mean and Covariance Discrepancy} (MMCD) \cite{ref_14} to measure and decrease the distribution discrepancy across domains. MMCD is comprised of MMD and \textit{Maximum Covariance Discrepancy} (MCD). MCD evaluates the Hilbert–Schmidt norm of the difference between covariance operators and can measure the second-order statistic in the RKHS. Therefore, MMCD can consider the first-order and the second-order statistics simultaneously in the RKHS and can capture more distribution information than MMD. For the second issue, we aim to learn more discriminative features at both local and global levels. At the local level, we use the manifold regularization to further exploit the similar geometrical property of the nearest points. At the global level, instead of focusing on the feature space, we concentrate on the label space. We consider the confusion relationship between different classes which is revealed by the inner product of the classifier predictions between different classes (shown in Fig \ref{exam}). The goal is that no examples are ambiguously classified into two classes at the same time. Thus we force the inner product of the same class close to 1 while the different classes close to 0, which encourages the samples in the same class to be more compact while the samples in the different classes to be more dispersed. Thus, TFDF can extract discriminative features.

To sum up, besides minimizing the empirical error in the source domain, TFDF also concentrates on minimizing the distribution discrepancy across domains  to learn transferable features and exploring both global and local discriminative information to learn discriminative features. However, TFDF is a non-convex problem that is difficult to be solved directly, so we firstly propose a variant of TFDF named TFDF-V, which is a convex optimization problem that is easy to be solved with a closed-form solution. Then, taking the solution of TFDF-V as the initial value of TFDF, we use the Adam algorithm \cite{ref_20} (a variant of stochastic gradient descent) to solve the TFDF optimization problem. Comprehensive experiments on five different real-world cross-domain visual recognition datasets are conducted, and the results verify the effectiveness of the proposed algorithm.

\section{Related Work}

\subsection { \textbf{Transferability in domain adaptation}} 

\subsubsection {Shallow domain adaptation}
Shallow domain adaptation methods include instance reweighting and distribution alignment. Instance reweighting based methods assume that the data from the source domain can be reused in the target domain by reweighting samples. Tradaboost \cite{ref_6} is the most representative method which is inspired by Adaboost \cite{ref_16}. The strategy of adjusting the weights of the source and target data is just the opposite, where the source data more conducive to the target data have greater weight in the source domain. LDML \cite{ref_17} also evaluates each sample, and takes full advantage of the pivotal samples, and filters out outliers. DMM \cite{ref_7} learns a transfer support vector machine by extracting invariant feature representations and estimating unbiased instance weights, to jointly minimize the cross-domain distribution discrepancy. However, the performance by instace reweighting is not satisfying.

Distribution alignment based methods focus on finding a feature transformation that projects features of two domains into another common subspace with less distribution discrepancy. The distribution discrepancy across domains includes marginal distribution discrepancy and conditional distribution discrepancy. TCA \cite{ref_8} tries to align marginal distribution across domains, which learns a domain-invariant representation during feature mapping. Based on TCA, JDA \cite{ref_9} tries to align both marginal distribution and conditional distribution simultaneously. Moreover, BDA \cite{ref_18} proposes a balance factor to leverage the importance of different distributions. MEDA \cite{ref_10} can dynamically evaluate the balance factor and has achieved promising performance. The above methods are all based on MMD, which only captures the first-order statistical information across domains. CORAL \cite{ref_12} explores the second-order statistic covariance of the target distribution. Many previous methods only adopt the first-order statitic information while ingore the second-order statistic information. Our method adopts MMCD \cite{ref_14} to evaluate the distribution discrepancy across domains and can capture more useful information for domain adaptation.

\subsubsection {Deep domain adaptation}
 Most deep domain adaptation methods are based on statistical discrepancy minimization. DDC \cite{b37} embeds a domain adaptation layer into the Alexnet \cite{b36} and minimizes \textit{Maximum Mean Discrepancy} (MMD) distance between features of this layer. DAN \cite{b38} minimizes the feature discrepancy between the last three layers of Alexnet \cite{b36} and the multiple-kernel MMD is used to measure the discrepancy. Other measures
are also adopted such as Kullback-Leibler (KL) divergence, Correlation Alignment (CORAL) \cite{b39} which measures the second-order statistical information and Central Moment Discrepancy (CMD) \cite{DBLP:conf/iclr/ZellingerGLNS17} which measures the high-order statistical information. These methods can utilize the deep neural network to extract more transferable features and also have achieved remarkable performance. 

Recently, Inspired by the generative adversarial network \cite{Goodfellow2014GenerativeAN},  adversarial learning is widely used in domain adaptation. DANN \cite{nb18} adopts a domain discriminator to distinguish the source domain from the target domain, while the feature extractor is trained to learn domain-invariant features to confuse the discriminator. ADDA \cite{nb33} designs a symmetrical structure where two feature extractors are adopted. Different from DANN, MCD \cite{nb24} proposes a method to minimize the $H\Delta H$-distance across domains in an adversarial way.

\subsection { \textbf{Discriminability in domain adaptation}}

Learning transferable features may harm the discriminability of the features. Therefore, learning discriminative features is another objective for domain adaptation methods. Inspired by Linear discriminant analysis (LDA) \cite{b31}, some methods take the geometrical information into consideration. For instance, the goal of JGSA \cite{b32} is to minimize the geometrical divergence across domains to enhance the discriminability in shallow domain adaptation. JJDA \cite{b34} extends this idea to deep domain adaptation and considers the instance-level discriminative information. Besides, LPJT \cite{b33} considers the manifold regularization via fisher criterion. ARTL \cite{ref_24} and MEDA also use the manifold regularization via local samples. These methods mainly focus on local discriminative information while the global discriminative information is ignored. TFDF can learn both local and global discriminative information, thus making the features more discriminative.

\section{Method}
\subsection{Problem Definition}
In this paper, we focus on unsupervised domain adaptation. There are a source domain $\mathcal{D}_s = \{(x^1_s, y^1_s),...,(x^{n_s}_s,y^{n_s}_s)\}$ of $n_s$ labeled source samples where $x^i_s \in \mathcal{X}_s, y^i_s \in \mathcal{Y}_s$, and a target domain $\mathcal{D}_t = \{x^1_t, ..., x^{n_t}_t\}$ of $n_t$ unlabled target samples where $x^i_t \in \mathcal{X}_t$. We assume the feature space and label space are the same, i.e., $\mathcal{X}_s = \mathcal{X}_t = \mathbb{R} ^ d$ and $\mathcal{Y}_s = \mathcal{Y}_t  = \{1,2,...,C\}$, while these distributions across domains are different. Especially, we assume the marginal distribution and conditional distribution are different across domains, i.e., $P_s(x_s) \neq P_t(x_t)$ and $Q_s(y_s|x_s) \neq Q_t(y_t|x_t)$. Our goal is to learn a classifier $f: x_t \rightarrow y_t$ to predict $y_t \in \mathcal{Y}_t$ for the target domain $\mathcal{D}_t$ using samples from both domains.

\subsection{Overall Objective}
Transferability and discriminability are two key criteria that characterize the superiority of feature representations to enable domain adaptation \cite{ref_30,ref_24,ref_8,ref_9,ref_10}. Thus, TFDF aims to learn a domain-invariant classifier $f$  based on the principle of \textit{Structural Risk Minimization} (SRM) to learn transferable and discriminative features for the distribution adaptation across domains. As mentioned before, we have four complementary objective functions as follows:
\begin{enumerate}[label=(\arabic*)]
    \item Minimizing the source empirical error of the labeled data in the source domain.
    \item Minimizing the distribution discrepancy across domains to learn transferable features.
    \item Minimizing the manifold regularization to learn local discriminative features.
    \item Minimizing the proposed class confusion loss to learn global discriminative features.
\end{enumerate}

The learning framework of TFDF is then formulated as:
\begin{equation}
\begin{split}
f = \arg \min_{f \in \mathcal{H_K}}R(f,\mathcal{D}_s) + \eta||f||_K^2 + \lambda D_{f}(\mathcal{D}_s,\mathcal{D}_t) 
    + \rho M_{f}(\mathcal{D}_s,\mathcal{D}_t) + \xi C_{f}(\mathcal{D}_s, \mathcal{D}_t)
\label{eq_overall}
\end{split}
\end{equation}
where $K$ is the kernel function induced by $\phi$ such that $\left<\phi(x_i), \phi(x_j)\right> = K(x_i,x_j)$ and $\phi: \mathcal{X} \rightarrow \mathcal{H}$ is the feature mapping function that projects the original feature vector to a Hilbert space $\mathcal{H}$. $R(f,\mathcal{D}_s)$ is the empirical error in the source domain, $||f||_K^2$ is the squared norm of $f$. The term $D_{f}(\mathcal{D}_s,\mathcal{D}_t)$ represents the distribution discrepancy across domains, $M_{f}(\mathcal{D}_s,\mathcal{D}_t)$ is a Laplacian regularization and $C_{f}(\mathcal{D}_s,\mathcal{D}_t)$ is the class confusion loss. $\eta$, $\lambda$, $\rho$ and $\xi$ are the corresponding regularization parameters.

In the next subsections, we introduce each objective separately and give the learning method finally.

\subsection{Source Error Minimization}
The first objective of TFDF is to learn an adaptive classifier that can classify source samples correctly. To begin with, we can induce a standard classifier $f$ on the labeled source samples. According to the structural risk minimization principle \cite{ref_27}, we minimize the source  empirical error as:
\begin{equation}
	R(f,\mathcal{D}_s) + \eta||f||_K^2 = \sum_{i=1}^{n_s}l(f(x_s^i),y_s^i) + \eta||f||_K^2
\label{CLL}
\end{equation}
where $||f||_K^2$ is the squared norm of $f$ in $\mathcal{H_K}$ and $l(,\cdot,)$ is the loss function for classification. In TFDF, the squared loss $l(x_i,y_i) = (y_i - f(x_i))^2$ is used. According to the \textit{Representer Theorem} \cite{ref_28}, the classifier in optimization problem (\ref{eq_overall})  can be represented as 
\begin{equation}
	f(x) = \sum_{i=1}^{n_s + n_t}\beta_i{K}(x_i,x)
\label{cal_F}
\end{equation}
and the equation (\ref{CLL}) can be represented as:
\begin{equation}
\begin{aligned}
	\sum_{i=1}^{n_s}l(f(x_s^i),y_s^i) + \eta||f||_K^2 &= \sum_{i=1}^{n_s + n_t}{A}_{ii}(y_i - f(x_i))^2 + \eta||f||_K^2 
	= ||({Y}-{\beta}^T{K}){A}||_F^2 + \eta \mathrm{tr}({\beta}^T{K}{\beta})
\end{aligned}
\label{RSM}
\end{equation}
where ${A} \in \mathbb{R}^{(n_s + n_t) \times (n_s + n_t)}$ is a diagonal label indicator matrix with ${A}_{ii} =1$ if $x_i \in \mathcal{D}_s$, and ${A}_{ii} = 0$ otherwise. ${Y} \in \mathbb{R}^{C \times (n_s + n_t)}$ is the label matrix with ${Y}_{ij} =1$ if $x_j$ belongs to class $i$, and ${Y}_{ij} = 0$ otherwise. ${K} \in \mathbb{R}^{(n_s+n_t)(n_s+n_t)}$ is kernel matrix, and ${\beta} = (\beta_1, ..., \beta_{n_s + n_t}) \in \mathbb{R}^{(n_s+n_t)\times C}$ are the the parameters of the classifier.
\subsection{Distribution Alignment}
The distribution discrepancy across domains will result in performance degradation when directly applying the classifier trained in the source domain to the target domain. Thus, TFDF aims to learn transferable features to reduce the distribution discrepancy, which includes marginal distribution discrepancy and conditional distribution discrepancy.

\textit{Maximum Mean Discrepancy} (MMD) is a widely used statistic to measure distribution distance, which compares different distributions $p(x)$ and $q(x)$ based on the distances between the sample means of two distributions in a reproducing kernel Hilbert space (RKHS) $\mathcal{H}$, namely
\begin{equation}
	MMD_{\mathcal{H}}^2[\mathcal{H},p,q] = ||E_p\phi(x) - E_q\phi(x)||_{\mathcal{H}}^2
\label{cal_MMD}
\end{equation}

However, MMD only measures the first-order statistic of different distributions. Some experiments have revealed that the second-order statistic (such as CORAL \cite{ref_12}) is also necessary to capture useful information for evaluating distribution discrepancy, which is ignored by existing methods. Recently, a new distribution metric termed \textit{Maximum Mean and Covariance Discrepancy} (MMCD) is proposed in \cite{ref_14}. MMCD considers both the first-order and the second-order statistical information in the RKHS, which is defined as, 
\begin{equation}
MMCD[p,q,\mathcal{H}] = (||E[p] - E[q]||_{\mathcal{H}}^2 + \gamma||C[p] - C[q]||_{\mathcal{HS}}^2)^{\frac{1}{2}}
\end{equation}
where $C[p] = E_{x \sim p}[\phi(x) \otimes \phi(x)] - E_{x \sim p}[\phi(x)]\otimes E_{x \sim p}[\phi(x)]$,  $||\cdot||_{\mathcal{HS}}$ denotes the \textit{Hilbert-Schmidt} norm of the vectors in $\mathcal{HS(H)}$. The empirical estimator of the squared MMCD with classifier $f$ can be given by \cite{ref_14},
\begin{equation}
	\widehat{MMCD}^2[p,q,\mathcal{H}] = \mathrm{tr}({\beta^TKMK^T\beta}) + ||{\beta^TKZK^T\beta}||_F^2 
\end{equation}
where
\begin{equation}
	{M}_{ij} = \begin{cases}
	\frac{1}{n_s^2},  & x_i,x_j \in \mathcal{D}_s\\ 
	\frac{1}{n_t^2}, & x_i,x_j \in \mathcal{D}_t\\ 
	-\frac{1}{n_sn_t}, & \text{otherwise} 
\end{cases}, \quad
{Z}_{ij} = \begin{cases}
	\frac{1}{n_s} - \frac{1}{n_s^2},  & i=j,x_i \in \mathcal{D}_s\\ 
	-\frac{1}{n_s^2}, & i \neq j, x_i,x_j \in \mathcal{D}_s\\ 
	\frac{1}{n_t^2} - \frac{1}{n_t}, & i = j,x_i \in \mathcal{D}_t\\ 
	\frac{1}{n_t^2} & i \neq j, x_i, x_j \in \mathcal{D}_t \\
	0, & \text{otherwise} 
\end{cases}
\end{equation}
Based on MMCD, the distribution discrepancy across domains  can be written as
\begin{equation}
D_{f}(\mathcal{D}_s,\mathcal{D}_t) = (1 - \mu) D_{md}(\mathcal{D}_s,\mathcal{D}_t) + \mu D_{cd}(\mathcal{D}_s,\mathcal{D}_t)
\label{eq_discrepancy}
\end{equation}
where $D_{md}(\mathcal{D}_s,\mathcal{D}_t)$ and $D_{cd}(\mathcal{D}_s,\mathcal{D}_t)$ denote the marginal distribution discrepancy and conditional distribution discrepancy, respectively. $\mu$ is a balance factor between these two discrepancy. The distance of marginal distribution discrepancy is defined as the empirical distances across domains and the distance of conditional probability distributions is defined as the sum of the empirical distances over the class labels between the sub-domains of a same label in the source and target domain,
\begin{equation}
\begin{aligned}
	&D_{md}(\mathcal{D}_s,\mathcal{D}_t) = \widehat{MMCD}^2[\mathcal{D}_s, \mathcal{D}_t,\mathcal{H}] = \mathrm{tr}({\beta^TKM_0K^T\beta}) + ||{\beta^TKZ_0K^T\beta}||_F^2 \\
	&D_{cd}(\mathcal{D}_s,\mathcal{D}_t) = \sum_{c=1}^C\widehat{MMCD}^2[\mathcal{D}_{s,c}, \mathcal{D}_{t,c},\mathcal{H}] = \sum_{c=1}^C(\mathrm{tr}({\beta^TKM_cK^T\beta}) + ||{\beta^TKZ_cK^T\beta}||_F^2)
\end{aligned}
\label{m_c}
\end{equation}
where
\begin{equation}
{({M}_0)}_{ij} = \begin{cases}
\frac{1}{n_s^2}, & x_i,x_j \in \mathcal{D}_s \\
\frac{1}{n_t^2}, &x_i,x_j \in \mathcal{D}_t \\
-\frac{1}{n_sn_t}, & otherwise
\end{cases}, \qquad \quad
{({M}_c)}_{ij} = \begin{cases}
	\frac{1}{n_{s,c}^2}, & x_i,x_j \in \mathcal{D}_{s,c} \\
	\frac{1}{n_{t,c}^2}, &x_i,x_j \in \mathcal{D}_{t,c} \\
	-\frac{1}{n_{s,c}n_{t,c}}, & \begin{cases}
		x_i \in \mathcal{D}_{s,c}, x_j \in \mathcal{D}_{t,c} \\
		x_j \in \mathcal{D}_{s,c}, x_i \in \mathcal{D}_{t,c}
	\end{cases} \\
	0, & otherwise
\end{cases}
\label{cal_M}
\end{equation}
\begin{equation}
{({Z}_0)}_{ij} = \begin{cases}
\frac{1}{n_s} - \frac{1}{n_s^2}, & i=j, x_i \in \mathcal{D}_s \\
-\frac{1}{n_s^2}, & i \neq j,x_i,x_j \in \mathcal{D}_s \\
\frac{1}{n_t^2} -\frac{1}{n_t}, & i = j, x_i \in \mathcal{D}_t \\
\frac{1}{n_t^2} & i \neq j, x_i, x_j \in \mathcal{D}_t \\
0 & otherwise
\end{cases},
{({Z}_c)}_{ij} = \begin{cases}
	\frac{1}{n_{s,c}} - \frac{1}{n_{s,c}^2}, & i=j, x_i \in \mathcal{D}_{s,c} \\
	-\frac{1}{n_{s,c}^2}, & i \neq j,x_i,x_j \in \mathcal{D}_{s,c} \\
	\frac{1}{n_{t,c}^2} -\frac{1}{n_{t,c}}, & i = j, x_i \in \mathcal{D}_{t,c} \\
	\frac{1}{n_{t,c}^2} & i \neq j, x_i, x_j \in \mathcal{D}_{t,c} \\
	0 & otherwise
	\end{cases}
\label{cal_Z}
\end{equation} 
where $\mathcal{D}_{s,c}(\mathcal{D}_{t,c}) = \{x_i|x_i \in \mathcal{D}_s(\mathcal{D}_t), y_i(\hat{y}_i) = c\}$, $y_i$($\hat{y}_i$) is the label (pseudo label) of the sample $x_s^i$($x_t^i$) and $n_{s,c}(n_{t,c}) = |\mathcal{D}_{s,c}(\mathcal{D}_{t,c})|$. As the term $||{\beta^TKZ_cK^T\beta}||_F^2$ is nonconvex, we can approximate the  convex upper bound of the  term in (\ref{m_c}) by using the following theorem:
\begin{theorem}
	Given the constraint that ${\beta^TKHK^T\beta = I}$, the following inequality holds
	\begin{equation}
		||{\beta^T KZ_cK^T\beta}||_F^2 \leq \sigma k 	||{\beta^T KZ_cK^T}||_F^2 
	\end{equation}
	where $k$ is the feature dimensionality and $\sigma = ||({KHK})^{-\frac{1}{2}}||^2$. 
\end{theorem}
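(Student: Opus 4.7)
The plan is to view the left-hand side as $\|W\beta\|_F^2$ for the matrix $W := \beta^T K Z_c K^T$ and then peel off the extra factor of $\beta$ by two applications of standard norm submultiplicativity. First I would use the column-wise inequality $\|W\beta\|_F \leq \|W\|_{op}\,\|\beta\|_F$ (valid for any pair of conformable matrices, obtained by applying $\|W v\|_2 \leq \|W\|_{op}\|v\|_2$ to each column of $\beta$), then the elementary fact that the operator norm is dominated by the Frobenius norm, $\|W\|_{op} \leq \|W\|_F$. Squaring and combining gives
\begin{equation*}
\|\beta^T K Z_c K^T \beta\|_F^2 \;\leq\; \|\beta\|_F^2\,\|\beta^T K Z_c K^T\|_F^2,
\end{equation*}
so the entire theorem reduces to showing $\|\beta\|_F^2 \leq \sigma k$ under the constraint $\beta^T K H K \beta = I_k$ (recalling $K^T = K$ since a kernel matrix is symmetric).

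For this remaining bound I would \emph{whiten} with $KHK$. Since $KHK$ is symmetric positive semidefinite, its square root $(KHK)^{1/2}$ exists, and the constraint is equivalent to saying that $U := (KHK)^{1/2}\beta$ has orthonormal columns, i.e.\ $U^T U = I_k$. Substituting $\beta = (KHK)^{-1/2}U$ and invoking the PSD trace inequality $\operatorname{tr}(XY) \leq \|X\|_{op}\operatorname{tr}(Y)$ (valid whenever $Y \succeq 0$), I obtain
\begin{equation*}
\|\beta\|_F^2 \;=\; \operatorname{tr}\!\bigl(U^T (KHK)^{-1} U\bigr) \;\leq\; \|(KHK)^{-1}\|_{op}\operatorname{tr}(U^T U) \;=\; \sigma\,k,
\end{equation*}
where the last equality uses $\|(KHK)^{-1}\|_{op} = \|(KHK)^{-1/2}\|_{op}^2 = \sigma$ straight from the definition in the statement.

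Chaining the two displayed inequalities delivers the claimed bound $\|\beta^T K Z_c K^T \beta\|_F^2 \leq \sigma k \,\|\beta^T K Z_c K^T\|_F^2$. The only genuine obstacle is the tacit assumption that $KHK$ is invertible so that the whitening substitution is well defined; if $KHK$ is merely positive semidefinite, I would restrict $\beta$ to the range of $(KHK)^{1/2}$ (which costs nothing, since components of $\beta$ in the null space of $KHK$ contribute zero to $\beta^T K H K \beta$ and would render the constraint infeasible if nonzero there) and replace the inverse by the Moore--Penrose pseudoinverse, leaving $\sigma$ unchanged on the relevant subspace. Everything else is a routine composition of the submultiplicative norm bound and the standard PSD trace inequality, so no further surprises are expected.
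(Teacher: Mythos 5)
Your proof is correct (in the regime where $KHK$ is invertible, which the paper's own argument also tacitly assumes), but it factors the bound differently from the paper. The paper inserts the identity $(KHK^T)^{-1/2}(KHK^T)^{1/2}$ between $K^T$ and the trailing $\beta$ and applies Frobenius submultiplicativity twice: the factor $k$ appears as $\|(KHK^T)^{1/2}\beta\|_F^2 = \mathrm{tr}(\beta^TKHK^T\beta) = \mathrm{tr}(I_k)$, and $\sigma$ appears when $\|(KHK^T)^{-1/2}\|^2$ is split off the left factor. You instead peel off the trailing $\beta$ immediately via $\|W\beta\|_F \le \|W\|_{op}\|\beta\|_F \le \|W\|_F\|\beta\|_F$ and then prove the standalone lemma $\|\beta\|_F^2 \le \sigma k$ by whitening with $(KHK)^{1/2}$. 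The two routes use the same two ingredients (the constraint to produce $k$, the square root of $KHK$ to produce $\sigma$) but in a different order; yours has the advantage of isolating a reusable bound on $\|\beta\|_F$ and of producing the tighter constant $\sigma = \|(KHK)^{-1}\|_{op}$, whereas the paper's final step, which uses $\|AB\|_F\le\|A\|_F\|B\|_F$, naturally yields the Frobenius version of $\sigma$; since the operator norm is dominated by the Frobenius norm, your bound implies the stated one under either reading of the unspecified norm. One caveat on your closing remark: when $KHK$ is singular (and it generally is, since $H$ annihilates the all-ones vector), a component of $\beta$ lying in $\ker(KHK)$ does not render the constraint infeasible --- it is simply invisible to the constraint --- so it cannot be discarded ``for free,'' and $\|\beta\|_F$ is then not controlled by the constraint at all; however, the paper's own first equality silently assumes invertibility as well, so this is a defect of the theorem as stated rather than something your argument introduces.
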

where $H = I - \frac{1}{(n_s+n_t)}\mathbf{1}\mathbf{1}^T$ is the centering matrix and the proof is shown in appendix. Note that the constrain that ${\beta^TKHK^T\beta = I}$ is widely used in previous method \cite{ref_8,ref_9,ref_14} to avoid yielding a trivial solution. According to theorem 1, we relax the objective (\ref{eq_discrepancy}) as 
\begin{equation}
	\begin{aligned}
	D_{f}(\mathcal{D}_s, \mathcal{D}_t) &= (1-\mu)\left(\mathrm{tr}({\beta^TKM_0K^T\beta}) + ||{\beta^TKZ_0K^T}||_F^2\right) + \\
	&\mu \left(\sum_{c=1}^C(\mathrm{tr}({\beta^TKM_cK^T\beta}) + ||{\beta^TKZ_cK^T}||_F^2)\right) \\
	&=  (1-\mu)\left(\mathrm{tr}({\beta^TKM_0K^T\beta}) + \mathrm{tr}({\beta^TKZ_0K^TKZ_0K^T\beta})\right) + \\
	&\mu \left(\sum_{c=1}^C(\mathrm{tr}({\beta^TKM_cK^T\beta}) +  \mathrm{tr}({\beta^TKZ_cK^T}{KZ_cK^T\beta}))\right) \\
	\label{MSDI_res}
	\end{aligned}
\end{equation}
\subsection{Local Discriminative Information}
In domain adaptation, we except to learn discriminative features for better classification and adaptation. In this work, local discriminative information refers to the sample-level discriminative information. Manifold regularization is a widely used method to extract local discriminative features.  According to the \textit{manifold assumption} \cite{ref_19}, if two points  are close in the intrinsic geometry, then the corresponding labels are similar. Under this assumption, the \textit{manifold regularization} is computed as 
\begin{equation}
    \begin{aligned}
    &M_{f}(\mathcal{D}_s,\mathcal{D}_t)  = \sum_{i,j=1}^{n_s+n_t}(f(x_i) - f(x_j))^2{W}_{ij} = \sum_{i,j=1}^{n_s+n_t}f(x_i){L}f(x_j) = \mathrm{tr}({\beta^TKLK\beta})
\end{aligned}
\label{eq_mr}
\end{equation}
where ${W}_{i,j}$ is the graph affinity matrix between $x_i$ and $x_j$. ${L} = {I} -{G}^{-\frac{1}{2}}{W}{G}^{-\frac{1}{2}}$ is the graph Laplacian matrix, ${G}$ is a diagonal matrix with ${G}_{ii} = \sum_{j=1}^{n+m}{W}_{ij}$,  ${W}$ is defined as
\begin{equation}
	{W}_{ij} = \begin{cases}
		cos(x_i,x_j), & if \quad x_i \in N_p(x_j) \vee x_j \in N_p(x_i) \\
		0, & otherwise
	\end{cases}
\label{cal_L}
\end{equation}
where $N_p(x_i)$ is the set of $p$-nearest neighbors of $x_i$.

\subsection{Global Discriminative Information}
Maximizing manifold regularization can only learn local discriminative information from the nearest neighbors while the global discriminative information (i.e., class-level discriminative information) is ignored. As shown in Fig \ref{exam}(a)-(b), there exists \textit{class confusion} in domain adaptation methods, which means that the classifier trained in the source domain may confuse to distinguish the correct class from a similar class, such as backpack and video-projector. In order to solve this problem, instead of focusing on the feature space, we concentrate on the label space, where the prediction outputs are able to reveal the class relationships. we use the prediction outputs of the samples in both domains to minimize the class confusion. The prediction of the classifier $f$ on both domains is defined as
\begin{equation}
	{\hat{F}} = f({X})= {\beta^TK}
\end{equation}
where ${\hat{F}} \in \mathbb{R}^{C*(n_s+n_t)}$, we recall that $F_{ij}$ reveals the relationship between the $j$-th example and the $i$-th class. We define
the pairwise class confusion between two classes $i$ and $j$ as
\begin{equation}
	{B_{ij}} = \hat{F}_{i.}^T \cdot \hat{F}_{j.}
\end{equation}
Note that $\hat{F}_{i.}$ denotes the probabilities that the examples in the target domain come from the $i$-th class. The class confusion is defined as the inner product between $\hat{F}_{i.}$ and $\hat{F}_{j.}$. So it measures the possibilities of classifying the examples in the target domain into the $i$-th and the $j$-th classes simultaneously.

Recall that ${B_{ij}}$ well measures the confusion between class $i$ and $j$. As we need to minimize the cross-class confusion, so the ideal situation is that no examples are ambiguously classified into two classes at the same time. In this case, the diagonal elements of $B$ which represent the inner of same classes should be 1 while the off-diagonal elements which represent the inner of different classes should be 0 (as shown in Fig \ref{exam}(a)). Therefore, our goal is to force $B$ to approach the identity matrix. Then, we can define the class confusion objective as
\begin{equation}
\begin{aligned}
	C_{f}(\mathcal{D}_s,\mathcal{D}_t) & = ||{B - I}||_F^2 = ||{\hat{F}\hat{F}^T - I}||_F^2 = ||{\beta^tK}({\beta^t} {K})^T - {I}||_F^2  \\ 
    & =  ||{\beta^TKK^T\beta -I}||_F^2 \\
    & = \mathrm{tr}({\beta^TKK^T\beta\beta^TKK^T\beta - 2\beta^TKK^T\beta + I})
\end{aligned}
\label{MCC}
\end{equation}

Note that previous methods only learn discriminative features at the instance level (local level). However, by enforcing different classes to seperate from each other, the proposed method can learn discriminative features at the class level (global level).

\subsection{Optimization Algorithm}
By substituting equation (\ref{RSM}), (\ref{MSDI_res}), (\ref{eq_mr}), (\ref{MCC}) into equation (\ref{eq_overall}), we can get the optimization problem as follows,
\begin{equation}
\begin{aligned}
	 \beta = \arg\min_{{\beta} } J(\beta) & = 
    ||({Y}-{\beta}^T{K}){A}||_F^2 + \eta \mathrm{tr}({\beta}^T {K}{\beta}) 
    + \lambda \mathrm{tr}(\beta^T {K}V {K}\beta) + \\
	&\rho \mathrm{tr}(\beta^T {K}L{K}\beta) + \xi||\beta^T{KK^T}\beta -{I}||_F^2 \\
	& s.t. \quad \beta^T  {K} H {{K}} \beta = {I}, 
\end{aligned}
\label{optim}
\end{equation}

where \begin{small}$V=(1-\mu)({M_0}+  Z_0  {K}  {K} Z_0) +\mu\ \sum_{c=1}^C({M_c}+  Z_c  {K}  {K} Z_c)$\end{small}.

Equation (\ref{optim}) is an optimization problem with constraints, which is difficult to be solved directly. We relax the problem as an unconstrained optimization problem, namely
\begin{equation}
\begin{aligned}
	 \beta = \arg\min_{{\beta}} J(\beta) &= 
    ||({Y}-{\beta}^T{K}){A}||_F^2 + \eta \mathrm{tr}({\beta}^T {K}{\beta}) + \lambda \mathrm{tr}(\beta^T {K}V {K}\beta) + \\
    & \rho \mathrm{tr}(\beta^T {K}L{K}\beta) +  \xi||\beta^T{KK^T}\beta - {I}||_F^2 + \delta \mathrm{tr}(\beta^T  {K} H {{K}} \beta - {I}), 
\end{aligned}
\label{optim_change}
\end{equation}

\begin{algorithm}[tp]
	\caption{\textbf{TFDF-V} }
	\label{algo_1}
	\begin{algorithmic}[1]
	\REQUIRE Input data $X = [X_s,X_t]$, source labels $Y^s$, iterations number $T$, parameters $\lambda, \rho, \eta, \xi$ and neighbor $p$.
	\ENSURE Initial value $\beta_{init}$.
	\STATE Train a base classifier using $\mathcal{D}_s$ and then apply prediction on $\mathcal{D}_t$ to get its pseudo labels $\hat{Y}_t$.
	\STATE Construct kernel matrix $K$, graph Laplacian matrix $L$ by equation (\ref{cal_L}).
	\FOR{$t = 1,2,...,T$}
	   \STATE Calculate the balance factor $\mu$ using equation (\ref{cal_mu}) and construct MMCD matrix $V$ by equation (\ref{cal_M}),(\ref{cal_Z}) and (\ref{optim}).
	   \STATE Calculate $\beta_{init}$ by solving equation (\ref{cal_init}) and obtain adaptive classifier $f$ by equation (\ref{cal_F}),
	   \STATE Update the pseudo labels of $\mathcal{D}_t$ : $\hat{Y}^t = f(X_t)$.
	\ENDFOR	
	\end{algorithmic}
\end{algorithm}

\begin{algorithm}[tp]
	\caption{\textbf{: TFDF} }
	\label{algo_2}
	\begin{algorithmic}[1]
	\REQUIRE Input data $X = [X_s,X_t]$, source labels $Y^s$, iterations number $T$, learning rate $\alpha$, the parameters $\lambda, \rho, \eta, \xi$, $\delta$ and neighbor $p$. \\
	\textbf{Initialization:} $\theta_1 = 0.9, \theta_2 = 0.999, \epsilon = 10^{-8}$,
	\ENSURE Adaptive classifier $f: X \rightarrow Y$.
	\STATE Train a base classifier using $\mathcal{D}_s$ and then apply prediction on $\mathcal{D}_t$ to get its pseudo labels $\hat{Y}_t$.
	\STATE Construct kernel matrix $K$, graph Laplacian matrix $L$ by equation (\ref{cal_L}).
	\STATE Calculate $\beta_{0} = \beta_{init}$ by solving equation (\ref{cal_init})  for \textbf{TFDF-V}.
	\FOR{$t = 1,2,...,T$}
	   \STATE Calculate the balance factor $\mu$ using equation (\ref{cal_mu}) and construct MMCD matrix $V$ by equation (\ref{cal_M}),(\ref{cal_Z}) and (\ref{optim}).
	   \STATE Get gradients $g_t$ by equation (\ref{get_ge}).
	   \STATE $m_t \leftarrow \theta_1 m_{t-1} + (1- \theta_1)g_t $, $v_t \leftarrow \theta_2 v_{t-1} + (1-\theta_2)g_t^2$, $\widehat{m}_t \leftarrow \frac{m_t}{1-\theta_1^t}$, $\widehat{v}_t \leftarrow \frac{v_t}{1-\theta_2^t}$.
	   \STATE $\beta_{t} = \beta_{t-1} - \alpha \frac{\hat{m}_t}{\sqrt{\hat{v}_t + \epsilon}}$.
	   \STATE Obtain adaptive classifier $f$ by equation (\ref{cal_F}).
	   \STATE Update the pseudo labels of $\mathcal{D}_t$ : $\hat{Y}^t = f(X_t)$.
	\ENDFOR	
	\end{algorithmic}
\end{algorithm} 

Since the fifth term of equation (\ref{optim_change}) is a non-convex fourth-order term, the optimization problem doesn't have the closed-form solution. 
Therefore, we  adopt the  adaptive moment estimation (\textbf{Adam}) algorithm \cite{ref_20} which is a variant of stochastic gradient descent (\textbf{SGD}) to solve $\beta$ iteratively. Take the derivative of $\beta$ and we will get
\begin{equation}
\begin{aligned}
	g_t = \frac{\partial J(\beta)}{\partial\beta} =  & -2{KAY^T} + 2{KAK}\beta + 2\eta {K} \beta + 2\lambda {KVK} \beta\\
	 & + 2\rho {KLK} \beta  +  2\delta {KHK} \beta + 2\xi {KK}\beta\beta^T{KK}\beta - 2\xi {KK}\beta
\end{aligned}
\label{get_ge}
\end{equation}

We experimentally found that it is important to set a proper initial value. Thus, we propose a variant of optimization problem (\ref{optim_change}) which is named as \textbf{TFDF-V}. We let $\xi = 0$ for optimization problem (\ref{optim_change}),  and by setting the derivative of objective function to ${0}$, we can get
\begin{equation}
	\beta_{init} = (({A}+\lambda {V}+\rho {L} + \delta {H}){K}+\eta {I})^{-1}{A}{Y}^T
\label{cal_init}
\end{equation}
Note that \textbf{TFDF-V} can be solved by a closed solution. We firstly run the \textbf{TFDF-V} algorithm to get $\beta_{init}$. Then $\beta_{init}$ is set as the initial value of \textbf{TFDF} algorithm. The detailed pseudo codes of \textbf{TFDF-V} and \textbf{TFDF} are described in algorithm \ref{algo_1} and algorithm \ref{algo_2}, respectively.

\vspace{-3mm}
\section{Experiments and evaluations} 
In this section, we evaluate the performance of TFDF by extensive experiments on five widely-used common datasets. Codes will be available online upon publication.

\subsection{Data Preparation}
We adopt five public image datasets: Office+Caltech, MNIST+USPS, and COIL,  which are popular for domain adaptation methods and have been widely used in previous works. Note that there are no noise and no missing values in these three datasets.

The \textbf{Office-Caltech} dataset \cite{ref_21} consists of images from 10 overlapping object classes between Office31 and Caltech-256.
Specifically, we have four domains, \textbf{C} (\textit{Caltech-256}), \textbf{A} (\textit{Amazon}), \textbf{W} (\textit{Webcam}), and \textbf{D} (\textit{DSLR}).  By randomly selecting two different domains as the source domain and target domain respectively, we construct $3 \times 4 = 12$ cross-domain object tasks, e.g. \textbf{C} $\rightarrow$ \textbf{A}, \textbf{C} $\rightarrow$ \textbf{W},..., \textbf{D} $\rightarrow$ \textbf{W}. Both 800 SURF \cite{ref_9} and 4,096 DeCaf6 \cite{b35} features are used for these datasets.

\textbf{USPS} (U) and \textbf{MNIST} (M) are standard digit recognition datasets containing handwritten digits from 0-9. USPS consists of 7291 training images and 2007
test images of size 16 × 16. MNIST consists of 60000 training images and 10000 test images of size 28 × 28. We construct two tasks: U $\rightarrow$ M and M $\rightarrow$ U. 256 SURF features are used for these datasets.

\textbf{COIL20} contains 20 objects with 1440 images. When the object rotates on the turntable, the object is photographed from different angles every 5 degrees, so each object has 72 images. Each image is 32×32 pixels with 256 gray levels per pixel. Two subsets COIL1 and COIL2 are partitioned from the dataset in \cite{ref_9}. We construct one dataset COIL1 vs COIL2 by selecting all 720 images in COIL1 to form the source data, and all 720 images in COIL2 to form the target data. We construct two tasks: COIL1 $\rightarrow$ COIL2 and COIL2 $\rightarrow$ COIL1.

\subsection{Baselines}
We compare the performance of TFDF with traditional machine learning approaches, several state-of-the-art traditional approaches, and deep domain adaptation approaches: 
\begin{itemize}
	\item traditional machine learning approaches: 1-Nearest Neighbor (\textbf{1NN}), Support Vector Machine (\textbf{SVM}) and Principal Component Analysis (\textbf{PCA}), 
	\item traditional domain adaptation approaches:  Transfer Component Analysis (\textbf{TCA}) \cite{ref_8}, Geodesic Flow Kernel (\textbf{GFK}) \cite{ref_21}, Joint Distribution Alignment (\textbf{JDA}) \cite{ref_9},
     Transfer Joint Matching (\textbf{TJM}) \cite{ref_23}, 
     Adaptation Regularization (\textbf{ARTL}) \cite{ref_24}, 
    CORrelation ALignment (\textbf{CORAL}) \cite{ref_12}, 
 Scatter Component Analysis (\textbf{SCA}) \cite{ref_25}, 
 Joint Geometrical and Statistical Alignment (\textbf{JGSA}) \cite{ref_26},
 Distribution Matching Machine (\textbf{DMM}) \cite{ref_7}, 
  MMCD based Domain Adaptation(\textbf{McDA}) \cite{ref_14}, 
 Manifold Embedded Distribution Alignment (\textbf{MEDA}) \cite{ref_10}, and Confidence-Aware Pseudo Label Selection (\textbf{CAPLS}) \cite{Wang2019UnifyingUD}. 
	\item deep domain adaptation approaches: \textbf{AlexNet} \cite{b36}, Deep Domain Confusion (\textbf{DDC}) \cite{b37}, Deep Adaptation Network (\textbf{DAN}) \cite{b38}, Deep CORAL (\textbf{DCORAL}) \cite{b39}, and a compact DNN (\textbf{DNN}) \cite{Wu2017ACD}.
\end{itemize}

\subsection{Experimental Setup}
For fair comparison and following \cite{ref_8,ref_21}, 1NN, SVM and PCA are trained on the labeled source data, and tested on the unlabeled target data; Other traditional domain adaptation methods (e.g. TCA,JDA) are performed on both the source data and the target data and tested to classify the unlabeled target data.  All the baselines except MEDA are performed in original feature space. While MEDA \cite{ref_10} and TFDF firstly perform \textit{manifold feature learning} to project the original feature to a new feature space with $z = g(x) = \sqrt{G}x$. $G$ can be computed efficiently by singular value decomposition \cite{ref_21}. RBF kernel is used in the experiment. We adopt the method in MEDA \cite{ref_10} to estimate the balance factor $\mu$, namely, 
\begin{equation}
	\mu = \frac{d_M}{d_M + \sum_{c=1}^Cd_c}
\label{cal_mu}
\end{equation}
where $d_A(\mathcal{D}_s,\mathcal{D}_t) = 2(1-2\epsilon(h))$ is $A$-distance \cite{ref_15}, which denotes the error of a linear classifier discriminating the two domains $\mathcal{D}_s$ and $\mathcal{D}_t$. We compute the marginal distribution discrepancy as $d_M = d_A(\mathcal{D}_s,\mathcal{D}_t)$ and the conditional distribution discrepancy as $\sum_{c = 1}^Cd_c = \sum_{c=1}^Cd_A(\mathcal{D}_{s,c}, \mathcal{D}_{t,c})$. Deep methods can be used to the original images and the results of deep domain adaptation methods are directly reported from their original papers wherever available.

Under our experimental setup, it is impossible to tune the optimal parameters using cross validation, since labeled and unlabeled samples are  from different distributions. Thus following previous methods \cite{ref_9}, we evaluate all methods by empirically searching the parameter space for the optimal parameter settings, and report the best results of each method.  We set number of nearest numbers by searching $p \in \{5,10, 15, 20, 30 \} $ and we set adaptation regularization parameter $\lambda, \eta, \rho, \delta$ by searching  $\lambda, \eta, \rho, \delta, \in \{0.001, 0.005, 0.01, 0.05, 0.1, 0.5, 1, 5,10\}$.

In the comparative study of TFDF, we set 1) $\eta = 0.1,\lambda = 10.0, \rho = 0.1, \delta = 0.01, p = 10 $ for COIL dataset, 2) $\eta = 0.1,\lambda = 10.0, \rho = 1.0, \delta = 0.01, p = 10 $ for digital and Office-Caltech dataset. Additionally, we set $T =10$ for TFDF-V and $T =100$ , $\alpha = 0.0005$ for TFDF. The experiments on parameter sensitivity in later experiments (Section \ref{sec_param}) indicate that TFDF stays robust with a wide range of parameter choices. We use classification accuracy on the test data as the evaluation metric, which is widely used in literature \cite{ref_9}: 
\begin{equation}
accuracy = \frac{|x: x \in \mathcal{D}_t \wedge \hat{y}(x) = y(x)|}{|x:x\in \mathcal{D}_t|}. 	
\end{equation}
$y(x)$ and $\hat{y}(x)$ are the ground truth and predicted labels for the target domain samples, respectively.

\subsection{Experimental Results and Analysis}

The results on five  real-world cross-domain (object, digit and object) datasets are shown in Table \ref{res_1}, \ref{res_3} and \ref{res_2}. From these results, we can draw several observations:

\begin{table*}[tbp]
	\centering
	\caption{Accuracy (\%) on Office-Caltech datasets using SURF features.}
	\resizebox{0.97\textwidth}{!}{
		\begin{tabular}{|c|c|c|c|c|c|c|c|c|c|c|c|c|c|c|}
			\hline
			Task & 1NN & SVM & PCA & TCA & GFK & JDA & TJM & CORAL & SCA & ARTL & McDA & MEDA  & {TFDF}\\ \hline \hline
			C $\rightarrow$ A & 23.7 & 53.1 & 39.5 & 45.6 & 46.0 & 43.1 & 46.8 & 52.1 & 45.6 & 44.1 & 43.5 & 56.5 & \textbf{58.0}\\ \hline
			C $\rightarrow$ W & 25.8 & 41.7 & 34.6 & 39.3 & 37.0 & 39.3 & 39.0 & 46.4 & 40.0 & 31.5 & 44.4 & \textbf{53.9} &  52.9\\ \hline
			C $\rightarrow$ D & 25.5 & 47.8 & 44.6 & 45.9 & 40.8 & 49.0 & 44.6 & 45.9 & 47.1 & 39.5 & 50.1 & 50.3 & \textbf{59.2}\\ \hline
			A $\rightarrow$ C & 26.0 & 41.7 & 39.0 & 42.0 & 40.7 & 40.9 & 39.5 & 45.1 & 39.7 & 36.1 & 41.0 & 43.9 & 44.4\\ \hline
			A $\rightarrow$ W & 29.8 & 31.9 & 35.9 & 40.0 & 37.0 & 38.0 & 42.0 & 44.4 & 34.9 & 33.6 & 44.4 & \textbf{53.2}  & 51.2 \\ \hline
			A $\rightarrow$ D & 25.5 & 44.6 & 33.8 & 35.7 & 40.1 & 42.0 & 45.2 & 39.5 & 39.5 & 36.9 & 42.7 & 45.9 & \textbf{46.5}\\ \hline
			W $\rightarrow$ C & 19.9 & 28.8 & 28.2 & 31.5 & 24.8 & 33.0 & 30.2 & 33.7 & 31.1 & 29.7 & \textbf{35.3} & 34.0  & 33.9 \\ \hline
			W $\rightarrow$ A & 23.0 & 27.6 & 29.1 & 30.5 & 27.6 & 29.8 & 30.0 & 36.0 & 30.0 & 38.3 & 37.4 & 42.7 & \textbf{43.1}\\ \hline
			W $\rightarrow$ D & 59.2 & 78.3 & 89.2 & 91.1 & 85.4 & \textbf{92.4} & 89.2 & 86.6 & 87.3 & 87.9 & 89.17 & 88.5   & 90.5\\ \hline
			D $\rightarrow$ C & 26.3 & 26.4 & 29.7 & 33.0 & 29.3 & 31.2 & 31.4 & 33.8 & 30.7 & 30.5 & 34.8 & 34.9 & \textbf{37.2}\\ \hline
			D $\rightarrow$ A & 28.5 & 26.2 & 33.2 & 32.8 & 28.7 & 33.4 & 32.8 & 37.7 & 31.6 & 34.9 & 36.7 & 41.2 & \textbf{42.1}\\ \hline
			D $\rightarrow$ W & 63.4 & 52.5 & 86.1 & 87.5 & 80.3 & 89.2 & 85.4 & 89.8 & 84.4 & 88.5 & 89.8 & 87.5 & \textbf{90.2}\\ \hline \hline
			Average & 31.4 &  41.1 & 43.6 & 46.2 & 43.1 & 46.8 & 46.3 & 48.8 & 45.2 & 44.3 & 49.2 & 52.7 & \textbf{54.1}\\ \hline
		\end{tabular}
	}
\label{res_1}
\end{table*}

\begin{table*}[t]
	\centering
	\caption{Accuracy~(\%) on Office-Caltech datasets using DeCaf6 features.}
	\label{res_3}
	\resizebox{0.97\textwidth}{!}{
		\begin{tabular}{|c|c|c|c|c|c|c|c|c|c|c|c|c|c|c|c|c|c|c|c|c|}
			\hline
			\multirow{2}{*}{Task} & \multicolumn{11}{c|}{Traditional Methods} & \multicolumn{5}{c|}{Deep Methods}  &\multirow{2}{*}{TFDF}\\ \cline{2-17}
			& SVM & PCA & TCA & GFK & JDA  & SCA & ARTL & JGSA & CORAL & DMM  & CAPLS & AlexNet & DDC & DAN & DCORAL &   DNN\\ \hline \hline
			C $\rightarrow$ A & 91.6 & 88.1 & 89.8 & 88.2 & 89.6 & 89.5 & 92.4 & 91.4  & 92.0  & 92.4 & 90.8 & 91.9 & 91.9 & 92.0 & 92.8  & 93.0  & \textbf{93.5}\\ \hline
			C $\rightarrow$ W  & 80.7 & 83.4 & 78.3 & 77.6 & 85.1 & 85.4 & 87.8 & 86.8 & 80.0 & 87.5 & 85.4 & 83.7 & 85.4 & 90.6 & 91.1  & 93.0  &\textbf{95.6} \\ \hline
			C $\rightarrow$ D  & 86.0 & 84.1 & 85.4 & 86.6 & 89.8 & 87.9 & 86.6 & 93.6 & 84.7 & 90.4 & \textbf{95.5} & 87.1 & 88.8 & 89.3 & 91.4  & 91.5  & 93.0 \\ \hline
			A $\rightarrow$ C & 82.2 & 79.3 & 82.6 & 79.2 & 83.6  & 78.8 & 87.4 & 84.9 & 83.2 & 84.8 & 86.1 & 83.0 & 85.0 & 84.1 & 84.7  & 86.5  & \textbf{87.8}  \\ \hline
			A $\rightarrow$ W & 71.9 & 70.9 & 74.2 & 70.9 & 78.3 & 75.9 & 88.5 & 81.0 & 74.6 & 84.7 &  87.1 & 79.5  & 86.1 & 91.8 & -   & \textbf{94.9}  &88.1 \\ \hline
			A $\rightarrow$ D & 80.9 & 82.2 & 81.5 & 82.2 & 80.3 & 85.4 & 85.4 & 88.5 & 84.1 & 92.4 & \textbf{94.9} & 87.4 & 89.0 & 91.7 & -  & 93.3  & 94.2 \\ \hline
			W $\rightarrow$ C & 67.9 & 70.3 & 80.4 & 69.8 & 84.8 & 74.8 & \textbf{88.2} & 85.0 & 75.5 & 81.7 & \textbf{88.2} & 73.0 & 78.0 & 81.2 & 79.3   & 85.9  & 86.5 \\ \hline
			W $\rightarrow$ A & 73.4 & 73.5 & 84.1 & 76.8 & 90.3 & 86.1 & 92.3 & 90.7 & 81.2 & 86.5 & 92.3 & 83.8 & 84.9 & 92.1 & -  & 92.5  & \textbf{93.2} \\ \hline
			W $\rightarrow$ D & 100.0 & 99.4 & 100.0 & 100.0 & 100.0 & 100.0 & 100.0 & 100.0 & 100.0  & 98.7 & 100.0 & 100.0 & 100.0 & 100.0 & -   & 100.0  & 98.7 \\ \hline
			D $\rightarrow$ C & 72.8 & 71.7 & 82.3 & 71.4 & 85.5 & 78.1 & 87.3 & 86.2 & 76.8 & 83.3 & \textbf{88.8} & 79.0 & 81.1 & 80.3 & 82.8   & 83.1  & 87.7 \\ \hline
			D $\rightarrow$ A & 78.7 & 79.2 & 89.1 & 76.3 & 91.7 & 90.0 & 92.7 & 92.0 & 85.5 & 90.7 & 93.0 & 87.1 & 89.5 & 90.0 & -  & 93.3  & \textbf{93.1} \\ \hline
			D $\rightarrow$ W & 98.3 & 98.0 & 99.7 & 99.3 & 99.7 & 98.6 & \textbf{100.0} & 99.7 & 99.3 & 99.3 & \textbf{100.0} & 97.7 & 98.2  & 98.5 & -  & 99.2  &98.0 \\ \hline \hline
			Average & 82.0 & 81.7 & 85.6 & 81.5 & 88.2 & 85.9 & 90.7 & 90.0 & 84.7 & 89.4 & 91.8 & 86.1 & 88.2 & 90.1 & -  & 91.5 &   \textbf{92.5} \\ \hline	
		\end{tabular}
	}
\end{table*}

\begin{table*}[tp]
	\centering
	\caption{Accuracy (\%) on USPS+MNIST and COIL datasets.}
	\label{tb-mnist}
	\resizebox{0.97\textwidth}{!}{
		\begin{tabular}{|c|c|c|c|c|c|c|c|c|c|c|c|c|c|c|c|c}
			\hline
			Task & 1NN & SVM & PCA & TCA & GFK & JDA & TJM & CORAL & SCA & ARTL & JGSA & McDA & MEDA &{TFDF} \\ \hline \hline
			U $\rightarrow$ M & 44.7 & 62.2 & 45.0 & 51.2 & 46.5 & 59.7 & 52.3 & 30.5 & 48.0 & 67.7 & 68.2 & - & 72.1 &  \textbf{80.6}\\ \hline
			M $\rightarrow$ U & 65.9 & 68.2 & 66.2 & 56.3 & 61.2 & 67.3 & 63.3 & 49.2 & 65.1 & 88.8 & 80.4 & - & 89.5 &  \textbf{89.9} \\ \hline
			COIL1 $\rightarrow$ COIL2 & 83.6 & 84.7 & 84.7 & 88.5 & 72.5 & 89.3 & 87.6 & 82.64  &-& 88.3 & - & \textbf{94.1} & 90.1 &  93.6\\ \hline
			COIL2 $\rightarrow$ COIL1 & 82.8 & 82.9 & 84.0 & 85.8 & 74.2 & 88.5 & 87.4 & 82.36  & - & 84.0 & - &89.6 & 87.1 &  \textbf{91.8}\\ \hline \hline
			Average & 69.3& 74.5 & 70.0 & 70.5 & 63.6 & 76.2 & 72.7 & 61.2 & - & 82.2 & - & -& 84.7 & \textbf{89.0}\\ \hline
		\end{tabular}
	}
\label{res_2}
\end{table*}

Firstly, TFDF achieves the best performance in most tasks (7/12 tasks) on the Office-Caltech dataset (SURF features). The average accuracy of TFDF on the Office-Caltech dataset (SURF features) is 54.1\%, while the best baseline is MEDA with 52.7\%. Compared with MEDA, the average performance is improved by 1.4\%. The observations on the COIL and digital datasets are the same and the average performance improvement is 2.7\% on five datasets. Since these results are obtained from a large number of datasets, it can convincingly verify that TFDF can build a robust adaptive classifier while reducing cross-domain discrepancy.

Secondly, both TFDF and McDA adopt MMCD to measure domain discrepancy, and they perform better than TCA, JDA, CORAL, and ARTL which either only consider the first-order statistical information or only consider the second-order statistical information. This improvement indicates that considering both the first-order and the second-order statistics simultaneously can capture more information for reducing cross-domain discrepancy. TFDF outperforms McDA because the proposed methods not only adopt MMCD to measure and decrease domain discrepancy but also consider the discriminative information of features. Compared with MEDA, which also performs manifold regularization to learn local discriminative features, minimizing class confusion loss is helpful to learn global discriminative features, thus achieving better performance. Compared with CAPLS, which focus on pesudo label selection within the distribution alignment process, TFDF also achieves better performance although TFDF use all the pseudo labels (even with wrong pseudo labels), which shows the robustness of the proposed method. Moreover, the error matrixes of different algorithms are shown in Fig \ref{exam}, which shows that TFDF can learn global discriminative information to avoid class confusion and get better performance.

Thirdly, TFDF also performs better than the deep methods (AlexNet, DDC, DAN,  DCORAL, and DNN) on Office+Caltech10 datasets. Based on the powerful features extracted from deep models, some traditional methods such as DMM, MDSI-V, and TFDF can achieve better performance than deep methods. As we can see, TFDF achieves the best performance with a  1\% improvement compared to DNN.

\subsection{Effectiveness Analysis}
\subsubsection{Ablation Study}  We conduct an ablation study to analyze how different components of our work contribute to the final performance. When learning the final classifier, TFDF involves four components: Structural Risk Minimization (SRM), Distribution Alignment (DA), Local Discriminative information (LD), and Global Discriminative information (GD). We empirically evaluate the importance of each component. To this end, we investigate different combinations of four components and report average classification
accuracy on five datasets in Table \ref{ab_study}. The first setting where only SRM is used is actually the source-only method where no adaptation is performed and this setting performs the worst. It can be observed that the accuracy is further improved after performing Distribution Alignment (DA). And the use of both global and local discriminative information improves the performance significantly on all datasets. Finally, a combination of all components can achieve the best results.

\begin{table}[h]
	\centering
	\caption{Results of ablation study(accuracy~(\%))}
	\label{tb-decaf}
	\resizebox{0.75\textwidth}{!}
	{
		\begin{tabular}{|c|c|c|c|c|c|c|}
			\hline
			\multicolumn{4}{|c|}{Method} & \multirow{2}{*}{Office-Caltech (surf features)} & \multirow{2}{*}{COIL}& \multirow{2}{*}{MNIST-USPS} \\ \cline{1-4}
			 SRM & DA & LD & GD &  &  &  \\ \hline 
			 \checkmark & $\times$ & $\times$ & $\times$ & 49.93 & 82.22 &  54.83\\ \hline
			 \checkmark & \checkmark & $\times$ & $\times$ & 51.92  & 90.29 &  74.89\\ 
			 \checkmark & $\times$ &  \checkmark &  $\times$ & 51.60 & 83.26 &  56.50\\ 
			 \checkmark & \checkmark & \checkmark & $\times$ & 52.20 & 91.12 & 80.86 \\ \hline
			 \checkmark & \checkmark & \checkmark & \checkmark & \textbf{54.10} &  \textbf{92.71}  &  \textbf{85.22}\\ \hline
		\end{tabular}}
	\label{ab_study}
\end{table}

\subsubsection{Distribution Distance} 
We run JDA, MEDA, and TFDF on task $C \rightarrow D$ (SURF features) using their optimal parameter settings. Then we compute the aggregate MMD distance and MMCD distance of each method on their induced embeddings by equation (\ref{cal_MMD}) and equation (\ref{eq_discrepancy}), respectively. To compute the true distance in both the marginal and conditional distributions across domains, we have to use the ground truth labels instead of the pseudo labels. However, the ground truth target labels are only used for verification, not for learning procedures.

 \begin{figure}[tbp]
	 \centering
	 \subfigure[MMD distance w.r.t. \#iterations]{
	 \includegraphics[width = 0.29\textwidth]{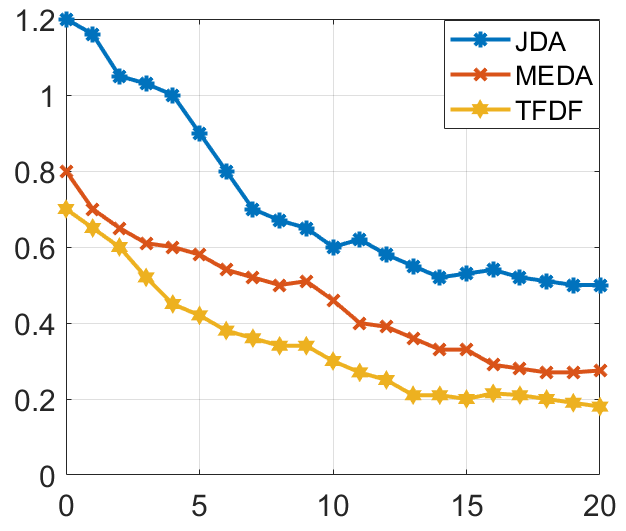}
	 }
	 \subfigure[MMCD distance w.r.t. \#iterations]{
	 \includegraphics[width = 0.287\textwidth]{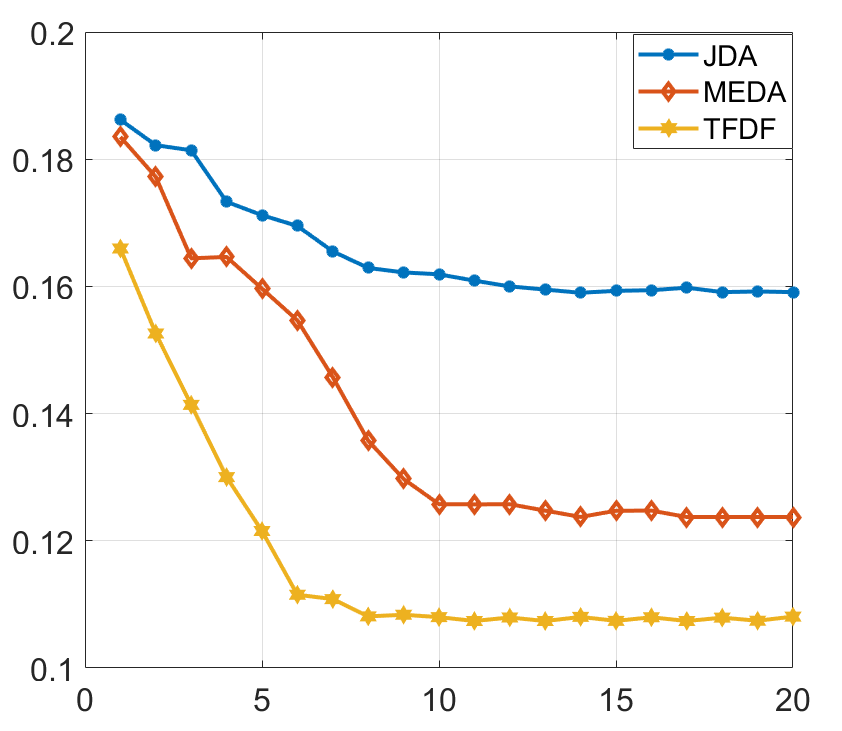}
	 }
	%  \quad
	 \subfigure[Accuracy(\%) w.r.t. \#iterations]{
	 \includegraphics[width = 0.305\textwidth]{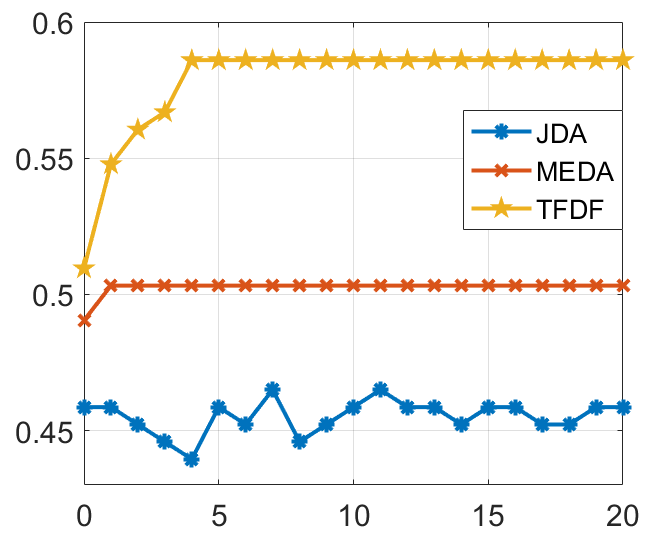}
	 }
 \caption{MMD distance, classification accuracy on the task $C \rightarrow D$}
 \label{acc}
 \end{figure}

Fig \ref{acc}(a) and  Fig \ref{acc}(b) show the MMD distance and MMCD distance computed for each method and Fig \ref{acc}(c) shows the prediction accuracy for each method. As we can see, MEDA and JDA can reduce the domain discrepancy which can be measured by either MMD distance or MMCD distance and achieve good performance in the target domain while TFDF achieves better results. JDA only considers the first-order statistic, while TFDF adapts both the first-order and the second-order statistics. Besides, by minimizing class confusion, TFDF can learn more discriminative features and improve performance.

\subsubsection{Feature Visualization.} In Fig \ref{t_sne}, we visualize the feature representations of task USPS $\rightarrow$  MNIST (\textbf{U}$\rightarrow$\textbf{M}) (10 classes) by t-SNE\cite{b35} using JDA, MEDA and TFDF. Before adaptation, we can see that there is a large distribution discrepancy across domains. After adaptation, JDA learns invariant features which can reduce distribution discrepancy. MEDA further considers the dynamic factor between a marginal distribution and conditional distribution and makes a better adaptation. TFDF not only learns transferable features but also learns local and global discriminative features. Therefore, besides a small distribution discrepancy, the features in both domains are more discriminative and cab be easily classified by the classifier.

\subsubsection{Time Complexity} 
We run JDA, MEDA, and TFDF on datasets Office-Caltech, COIL, and USPS+MNIST using their optimal parameter settings. Then we compute the running time of each method and the results are shown in Table 5. As we can see, JDA and MEDA need less time than TFDF as they are solved by a closed-form solution. Although TFDF-V can be solved by a closed-form solution, TFDF needs more iterations to get the final results than these two baselines, which is a shortcoming of the proposed method.

\begin{table}[h]
	\centering
	\resizebox{0.6\textwidth}{!}
	{
		\begin{tabular}{cccc}
		\toprule
		& Office-Caltech (surf features) & COIL  & MNIST-USPS \\
		\midrule
		JDA  & 5.66 min          & 0.94 min & 8.08 min       \\
		MEDA & 3.51 min          & 0.73 min & 3.27 min      \\
		TFDF & 33.10 min         & 7.86 min & 52.42 min     \\
		\bottomrule
		\end{tabular}
	}
\caption{Runing time on different tasks}
\label{running_time}
\end{table}

\subsection{Parameter Sensitivity}

\label{sec_param}
 In this section, we evaluate TFDF with a wide range of values for regularization parameters $\rho$, $\lambda$, $\eta$ and neighbors number $p$. We only report the results on MNIST $\rightarrow$  USPS (M $\rightarrow$ U), COIL 1 $\rightarrow$ COIL2 and C $\rightarrow$ D tasks, while similar trends on the other tasks are not shown due to space limitation. The results are shown in Fig \ref{param}. It can be observed that TFDF achieves a robust performance with regard to a wide range of parameter values.  Specifically, $p \in [8,16]$, $\rho \in [0.1,1]$, $\lambda \in [1,10]$ and $\eta \in [0.05,0.1]$ are the optimal parameter values.

\section{Conclusion}
In this paper, we propose a method called \textit{learning TransFerable and Discriminative Features for unsupervised domain adaptation} (TFDF), which could learn both transferable and discriminative features simultaneously. On the one hand,  we adopt a recently proposed statistic called MMCD to measure domain discrepancy, which can capture both the first-order and the second-order statistical information simultaneously, thus more statistical information can be explored than MMD-based methods. On the other hand, we propose to learn both local and global discriminative features through manifold regularization and proposed class confusion loss respectively. With the principle of empirical risk minimization, TFDF also integrates the source classification error with the above objectives into a uniform optimization problem. Comprehensive experiments are conducted and the results verified the effectiveness of the proposed method.

\begin{figure*}[t]
	\centering
	\subfigure[t-sne before adaptation]{
	\includegraphics[width = 0.232\textwidth]{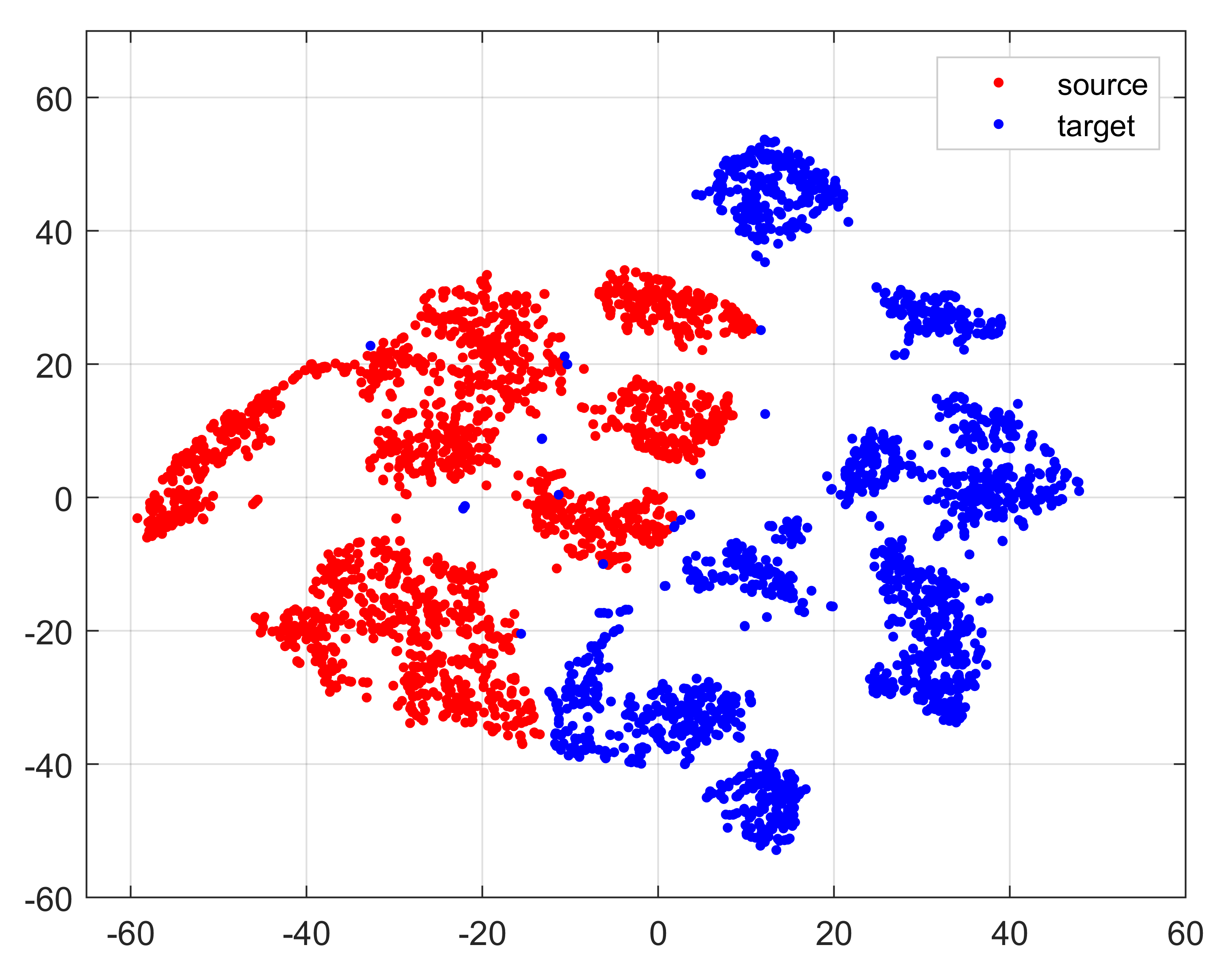}
	}
	\subfigure[t-sne by JDA]{
	\includegraphics[width = 0.232\textwidth]{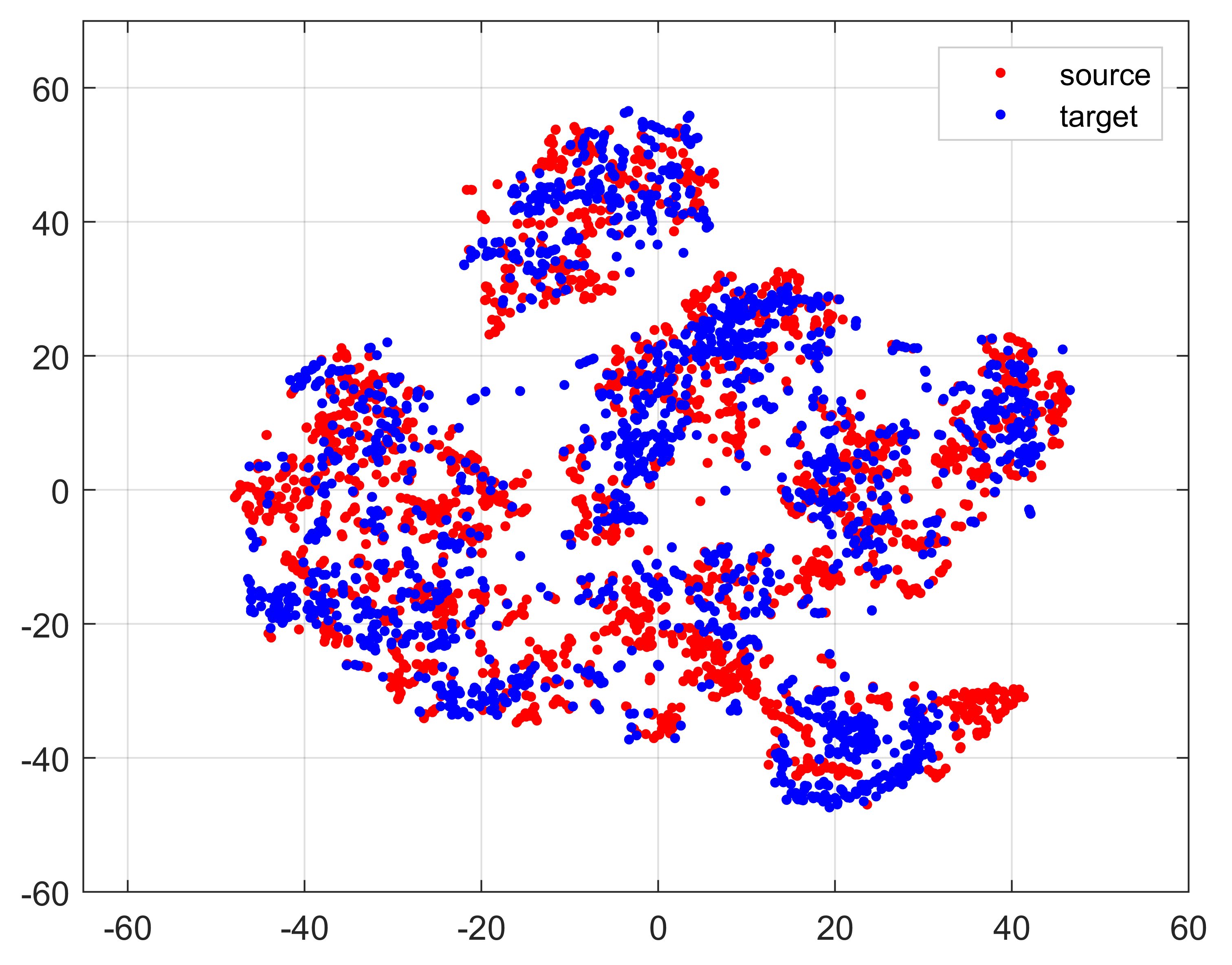}
	}
	\subfigure[t-sne by MEDA]{
	\includegraphics[width = 0.23\textwidth]{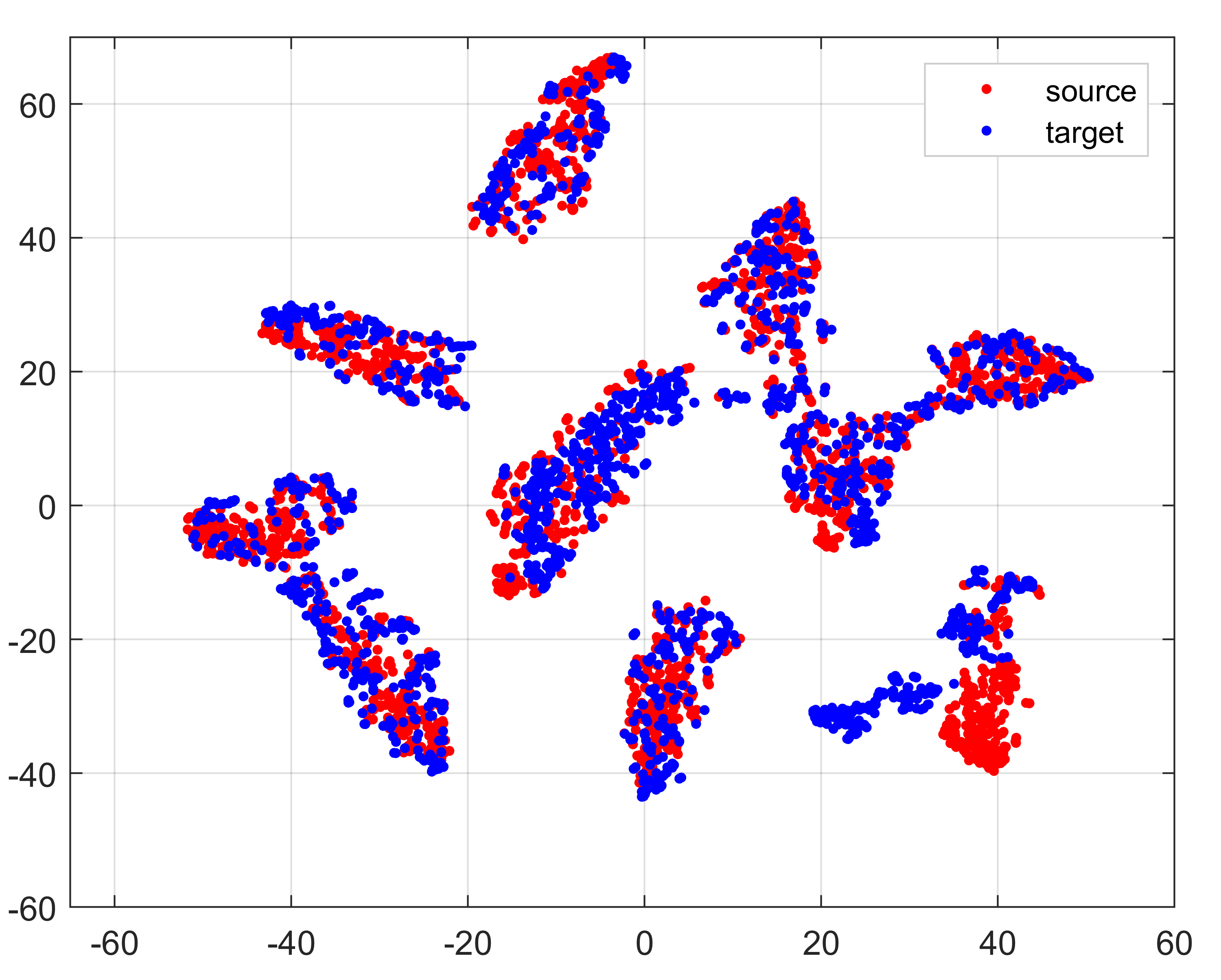}
	}
	\subfigure[t-sne by TFDF]{
	\includegraphics[width = 0.23\textwidth]{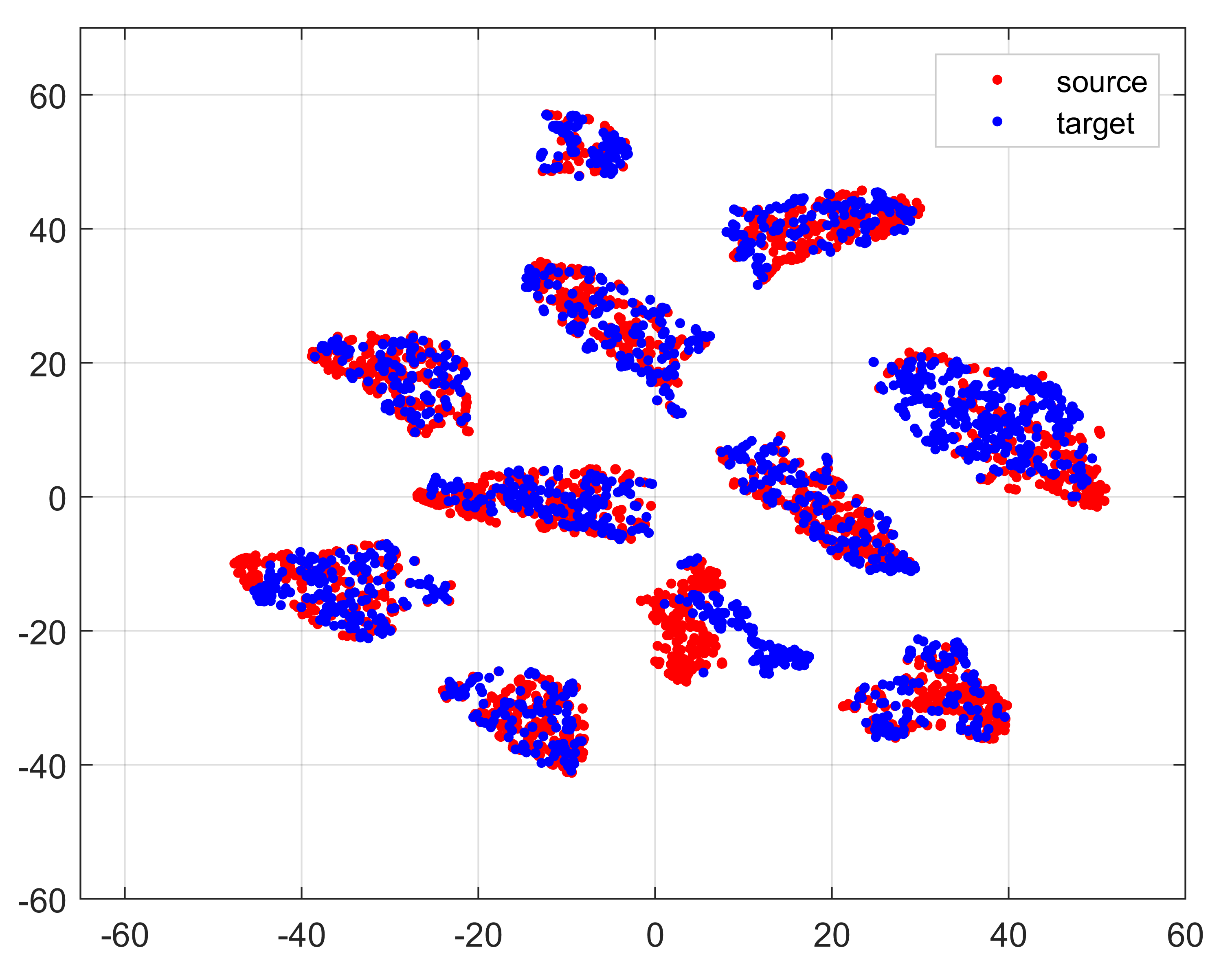}
	}
\caption{Feature visualization on task $U \rightarrow M$}
\label{t_sne}
\end{figure*}

\begin{figure*}[tbp]
	\centering
	\subfigure[neighbors $p$]{
	\includegraphics[width = 0.215\textwidth]{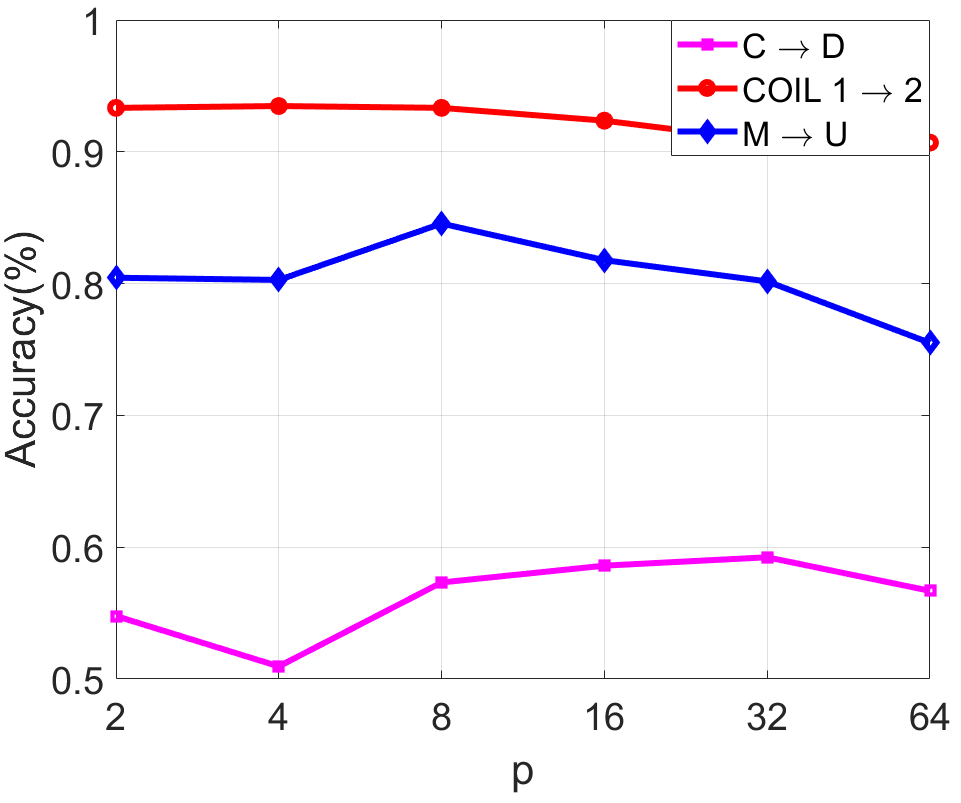}
	}
	\subfigure[$\rho$]{
	\includegraphics[width = 0.24\textwidth]{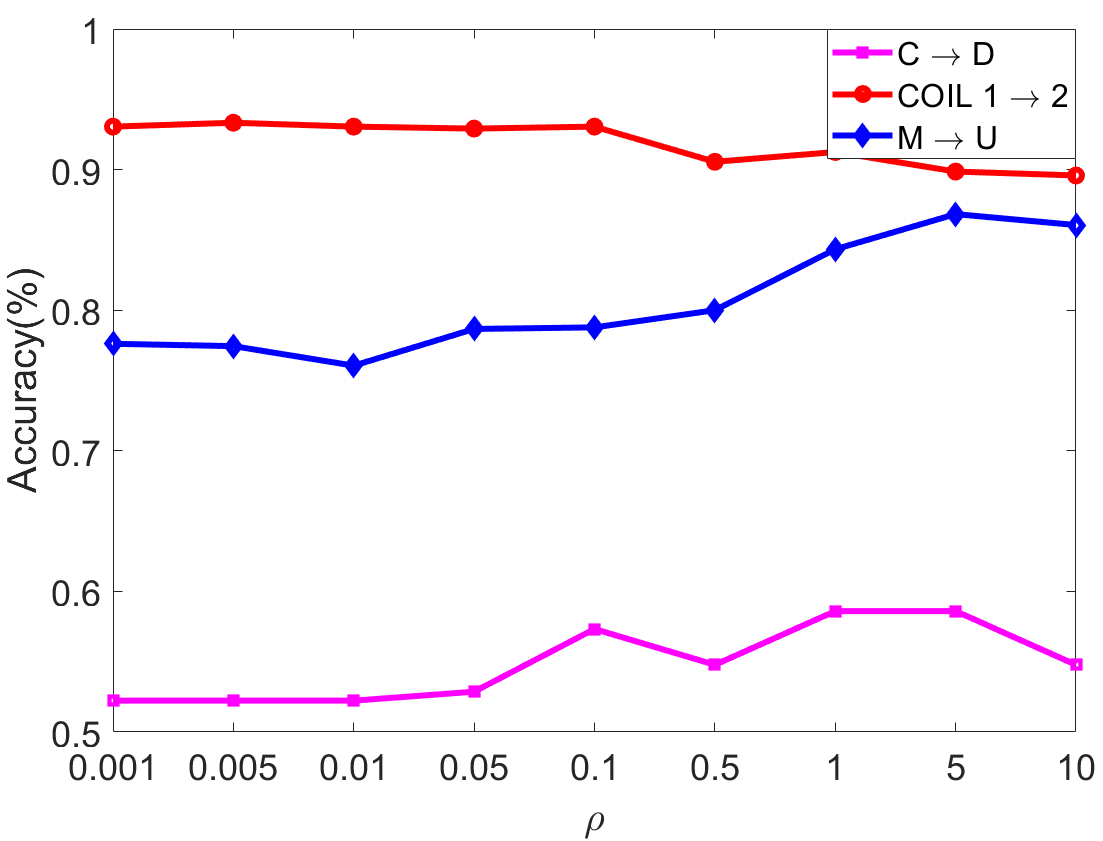}
	}
	\subfigure[$\lambda$]{
	\includegraphics[width = 0.23\textwidth]{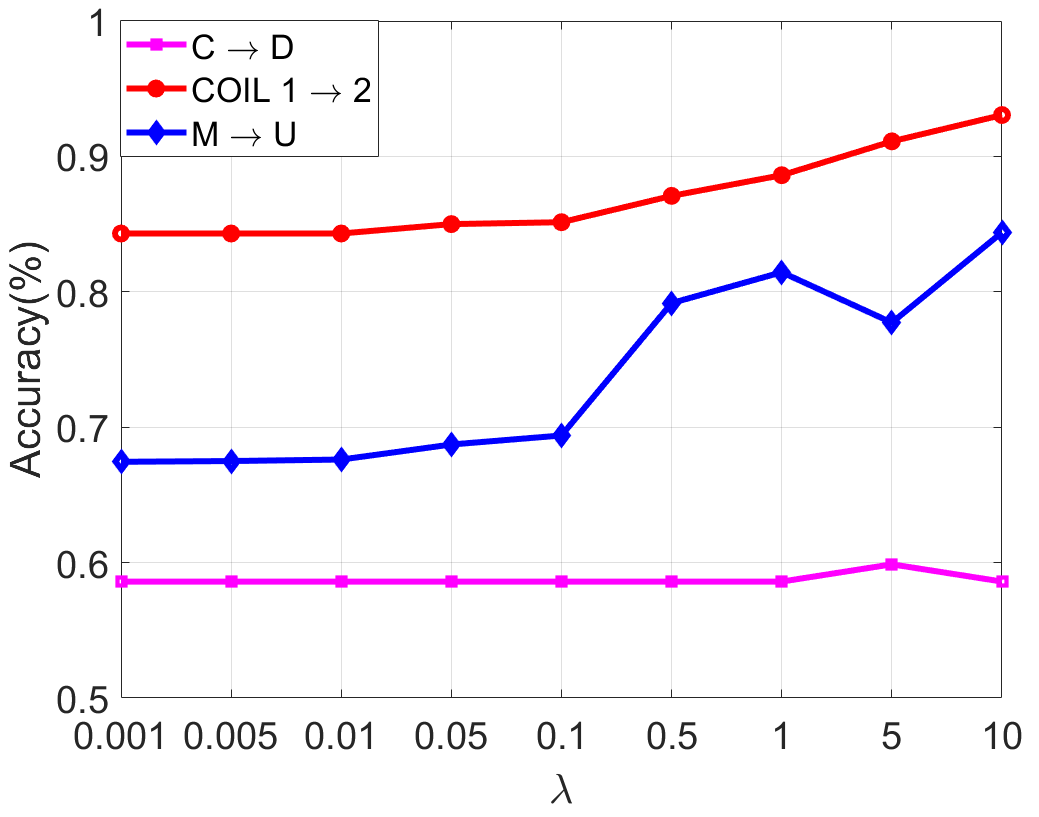}
	}
	\subfigure[$\eta$]{
	\includegraphics[width = 0.23\textwidth]{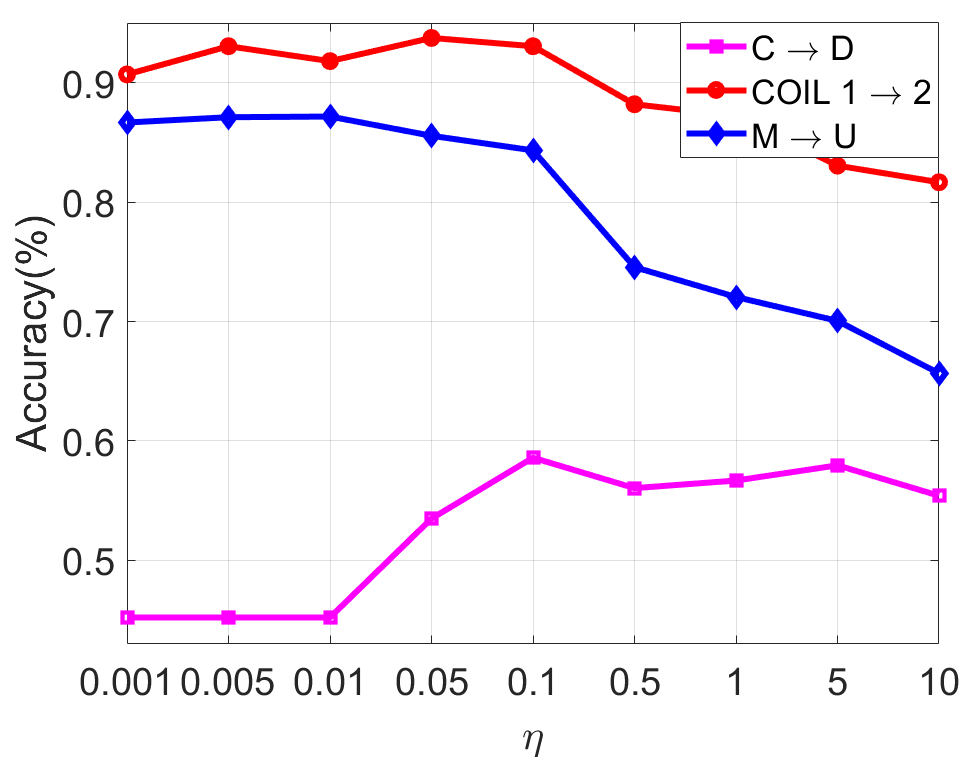}
	}
\caption{Parameter sensitivity on three tasks.}
\label{param}

\end{figure*}

\section{Acknowledgement}

This paper is supported by the National Key Research and Development Program of China (Grant No. 2018YFB1403400), the National Natural Science Foundation of China (Grant No. 61876080, No. 62002137), the Key Research and Development Program of Jiangsu(Grant No. BE2019105), the Collaborative Innovation Center of Novel Software Technology and Industrialization at Nanjing University.

\nocite{*} 
% if your bibliography is in bibtex format, use those commands:
\bibliographystyle{ios1}           % Style BST file.
\bibliography{ida}        % Bibliography file (usually '*.bib')

\begin{thebibliography}{47}
% BibTex style file: ios1.bst, 2019-01-11
\ifx \bisbn   \undefined \def \bisbn  #1{ISBN #1}\fi
\ifx \binits  \undefined \def \binits#1{#1} \fi
\ifx \bauthor  \undefined \def \bauthor#1{#1} \fi
\ifx \bjtitle  \undefined \def \bjtitle#1{\textit{#1}}\fi
\ifx \batitle  \undefined \def \batitle#1{#1} \fi
\ifx \bctitle  \undefined \def \bctitle#1{#1} \fi
\ifx \bvolume  \undefined \def \bvolume#1{\textbf{#1}}\fi
\ifx \byear  \undefined \def \byear#1{#1} \fi
\ifx \bissue  \undefined \def \bissue#1{#1} \fi
\ifx \bfpage  \undefined \def \bfpage#1{#1} \fi
\ifx \blpage  \undefined \def \blpage #1{#1} \fi
\ifx \burl  \undefined \def \burl#1{#1} \fi
\ifx \doiurl  \undefined \def \doiurl#1{#1} \fi
\ifx \betal  \undefined \def \betal{et al.} \fi
\ifx \binstitute  \undefined \def \binstitute#1{#1} \fi
\ifx \beditor  \undefined \def \beditor#1{#1} \fi
\ifx \bpublisher  \undefined \def \bpublisher#1{#1} \fi
\ifx \bbtitle  \undefined \def \bbtitle#1{\textit{#1}} \fi
\ifx \bedition  \undefined \def \bedition#1{#1} \fi
\ifx \bseriesno  \undefined \def \bseriesno#1{#1} \fi
\ifx \blocation  \undefined \def \blocation#1{#1} \fi
\ifx \bsertitle  \undefined \def \bsertitle#1{#1} \fi
\ifx \bsnm \undefined \def \bsnm#1{#1} \fi
\ifx \bsuffix \undefined \def \bsuffix#1{#1} \fi
\ifx \bparticle \undefined \def \bparticle#1{#1} \fi
\ifx \barticle \undefined \def \barticle#1{#1} \fi
\ifx \botherref \undefined \def \botherref #1{#1} \fi
\ifx \url \undefined \def \url#1{#1} \fi
\ifx \bchapter \undefined \def \bchapter#1{#1} \fi
\ifx \bbook \undefined \def \bbook#1{#1} \fi
\ifx \bcomment \undefined \def \bcomment#1{#1} \fi
\ifx \oauthor \undefined \def \oauthor#1{#1} \fi
\ifx \citeauthoryear \undefined \def \citeauthoryear#1{#1} \fi
\ifx \texttildelow  \undefined \def \texttildelow{\symbol{126}} \fi
\def \endbibitem {}
\ifx \bconflocation  \undefined \def \bconflocation#1{#1} \fi

\bibitem{Wu2007Top1A}
\begin{barticle}
\bauthor{\binits{X.}~\bsnm{Wu}},
\bauthor{\binits{V.}~\bsnm{Kumar}},
\bauthor{\binits{J.}~\bsnm{Quinlan}},
\bauthor{\binits{J.}~\bsnm{Ghosh}},
\bauthor{\binits{Q.}~\bsnm{Yang}},
\bauthor{\binits{H.}~\bsnm{Motoda}},
\bauthor{\binits{G.}~\bsnm{McLachlan}},
\bauthor{\binits{A.F.M.}~\bsnm{Ng}},
\bauthor{\binits{B.}~\bsnm{Liu}},
\bauthor{\binits{P.S.}~\bsnm{Yu}},
\bauthor{\binits{Z.}~\bsnm{Zhou}},
\bauthor{\binits{M.}~\bsnm{Steinbach}},
\bauthor{\binits{D.}~\bsnm{Hand}} and
\bauthor{\binits{D.}~\bsnm{Steinberg}},
\batitle{Top 10 algorithms in data mining},
\bjtitle{Knowledge and Information Systems}
\bvolume{14}
(\byear{2007}),
\bfpage{1}--\blpage{37}.
\end{barticle}
\endbibitem

\bibitem{ref_5}
\begin{barticle}
\bauthor{\binits{S.J.}~\bsnm{Pan}} and
\bauthor{\binits{Q.}~\bsnm{Yang}},
\batitle{A Survey on Transfer Learning},
\bjtitle{IEEE Transactions on Knowledge and Data Engineering}
\bvolume{22}
(\byear{2010}),
\bfpage{1345}--\blpage{1359}.
\end{barticle}
\endbibitem

\bibitem{ref_8}
\begin{barticle}
\bauthor{\binits{S.J.}~\bsnm{Pan}},
\bauthor{\binits{I.W.-H.}~\bsnm{Tsang}},
\bauthor{\binits{J.T.}~\bsnm{Kwok}} and
\bauthor{\binits{Q.}~\bsnm{Yang}},
\batitle{Domain Adaptation via Transfer Component Analysis},
\bjtitle{IEEE Transactions on Neural Networks}
\bvolume{22}
(\byear{2011}),
\bfpage{199}--\blpage{210}.
\end{barticle}
\endbibitem

\bibitem{ref_10}
\begin{bchapter}
\bauthor{\binits{J.}~\bsnm{Wang}},
\bauthor{\binits{W.}~\bsnm{Feng}},
\bauthor{\binits{Y.}~\bsnm{Chen}},
\bauthor{\binits{H.}~\bsnm{Yu}},
\bauthor{\binits{M.}~\bsnm{Huang}} and
\bauthor{\binits{P.S.}~\bsnm{Yu}},
\bctitle{Visual Domain Adaptation with Manifold Embedded Distribution
  Alignment},
in: \bbtitle{MM '18},
\byear{2018}.
\end{bchapter}
\endbibitem

\bibitem{ref_30}
\begin{bchapter}
\bauthor{\binits{X.}~\bsnm{Chen}},
\bauthor{\binits{S.}~\bsnm{Wang}},
\bauthor{\binits{M.}~\bsnm{Long}} and
\bauthor{\binits{J.}~\bsnm{Wang}},
\bctitle{Transferability vs. Discriminability: Batch Spectral Penalization for
  Adversarial Domain Adaptation},
in: \bbtitle{ICML},
\byear{2019}.
\end{bchapter}
\endbibitem

\bibitem{ref_24}
\begin{barticle}
\bauthor{\binits{M.}~\bsnm{Long}},
\bauthor{\binits{J.}~\bsnm{Wang}},
\bauthor{\binits{G.}~\bsnm{Ding}},
\bauthor{\binits{S.J.}~\bsnm{Pan}} and
\bauthor{\binits{P.S.}~\bsnm{Yu}},
\batitle{Adaptation Regularization: A General Framework for Transfer Learning},
\bjtitle{IEEE Transactions on Knowledge and Data Engineering}
\bvolume{26}
(\byear{2014}),
\bfpage{1076}--\blpage{1089}.
\end{barticle}
\endbibitem

\bibitem{ref_9}
\begin{botherref}
\oauthor{\binits{M.}~\bsnm{Long}},
\oauthor{\binits{J.}~\bsnm{Wang}},
\oauthor{\binits{G.}~\bsnm{Ding}},
\oauthor{\binits{J.-G.}~\bsnm{Sun}} and
\oauthor{\binits{P.S.}~\bsnm{Yu}},
Transfer Feature Learning with Joint Distribution Adaptation,
\textit{CVPR}
(2013),
2200--2207.
\end{botherref}
\endbibitem

\bibitem{ref_6}
\begin{bchapter}
\bauthor{\binits{W.}~\bsnm{Dai}},
\bauthor{\binits{Q.}~\bsnm{Yang}},
\bauthor{\binits{G.-R.}~\bsnm{Xue}} and
\bauthor{\binits{Y.}~\bsnm{Yu}},
\bctitle{Boosting for transfer learning},
in: \bbtitle{ICML '07},
\byear{2007}.
\end{bchapter}
\endbibitem

\bibitem{ref_7}
\begin{bchapter}
\bauthor{\binits{Y.}~\bsnm{Cao}},
\bauthor{\binits{M.}~\bsnm{Long}} and
\bauthor{\binits{J.}~\bsnm{Wang}},
\bctitle{Unsupervised Domain Adaptation With Distribution Matching Machines},
in: \bbtitle{AAAI},
\byear{2018}.
\end{bchapter}
\endbibitem

\bibitem{ref_11}
\begin{barticle}
\bauthor{\binits{A.}~\bsnm{Gretton}},
\bauthor{\binits{K.M.}~\bsnm{Borgwardt}},
\bauthor{\binits{M.J.}~\bsnm{Rasch}},
\bauthor{\binits{B.}~\bsnm{Sch{\"o}lkopf}} and
\bauthor{\binits{A.J.}~\bsnm{Smola}},
\batitle{A Kernel Two-Sample Test},
\bjtitle{J. Mach. Learn. Res.}
\bvolume{13}
(\byear{2012}),
\bfpage{723}--\blpage{773}.
\end{barticle}
\endbibitem

\bibitem{ref_26}
\begin{botherref}
\oauthor{\binits{J.}~\bsnm{Zhang}},
\oauthor{\binits{W.}~\bsnm{Li}} and
\oauthor{\binits{P.}~\bsnm{Ogunbona}},
Joint Geometrical and Statistical Alignment for Visual Domain Adaptation,
\textit{2017 IEEE Conference on Computer Vision and Pattern Recognition (CVPR)}
(2017),
5150--5158.
\end{botherref}
\endbibitem

\bibitem{b33}
\begin{botherref}
\oauthor{\binits{L.}~\bsnm{Jing-jing}},
\oauthor{\binits{J.}~\bsnm{Mengmeng}},
\oauthor{\binits{L.}~\bsnm{Ke}},
\oauthor{\binits{Z.}~\bsnm{Lei}} and
\oauthor{\binits{S.H.}~\bsnm{Tao}},
Locality Preserving Joint Transfer for Domain Adaptation,
\textit{arXiv: Computer Vision and Pattern Recognition}
(2019).
\end{botherref}
\endbibitem

\bibitem{ref_19}
\begin{barticle}
\bauthor{\binits{M.}~\bsnm{Belkin}},
\bauthor{\binits{P.}~\bsnm{Niyogi}} and
\bauthor{\binits{V.}~\bsnm{Sindhwani}},
\batitle{Manifold Regularization: A Geometric Framework for Learning from
  Labeled and Unlabeled Examples},
\bjtitle{J. Mach. Learn. Res.}
\bvolume{7}
(\byear{2006}),
\bfpage{2399}--\blpage{2434}.
\end{barticle}
\endbibitem

\bibitem{ref_12}
\begin{bchapter}
\bauthor{\binits{B.}~\bsnm{Sun}},
\bauthor{\binits{J.}~\bsnm{Feng}} and
\bauthor{\binits{K.}~\bsnm{Saenko}},
\bctitle{Return of Frustratingly Easy Domain Adaptation},
in: \bbtitle{AAAI},
\byear{2015}.
\end{bchapter}
\endbibitem

\bibitem{ref_13}
\begin{botherref}
\oauthor{\binits{Y.}~\bsnm{Jin}},
\oauthor{\binits{X.}~\bsnm{Wang}},
\oauthor{\binits{M.}~\bsnm{Long}} and
\oauthor{\binits{J.}~\bsnm{Wang}},
Less Confusion More Transferable: Minimum Class Confusion for Versatile Domain
  Adaptation,
\textit{ECCV}
(2020).
\end{botherref}
\endbibitem

\bibitem{ref_14}
\begin{barticle}
\bauthor{\binits{W.}~\bsnm{Zhang}},
\bauthor{\binits{X.}~\bsnm{Zhang}},
\bauthor{\binits{L.}~\bsnm{Lan}} and
\bauthor{\binits{Z.}~\bsnm{Luo}},
\batitle{Maximum Mean and Covariance Discrepancy for Unsupervised Domain
  Adaptation},
\bjtitle{Neural Processing Letters}
\bvolume{51}
(\byear{2019}),
\bfpage{347}--\blpage{366}.
\end{barticle}
\endbibitem

\bibitem{ref_20}
\begin{botherref}
\oauthor{\binits{D.P.}~\bsnm{Kingma}} and
\oauthor{\binits{J.}~\bsnm{Ba}},
Adam: A Method for Stochastic Optimization,
\textit{ICLR}
(2014).
\end{botherref}
\endbibitem

\bibitem{ref_16}
\begin{barticle}
\bauthor{\binits{A.J.}~\bsnm{Wyner}},
\bauthor{\binits{M.}~\bsnm{Olson}},
\bauthor{\binits{J.}~\bsnm{Bleich}} and
\bauthor{\binits{D.}~\bsnm{Mease}},
\batitle{Explaining the Success of AdaBoost and Random Forests as Interpolating
  Classifiers},
\bjtitle{J. Mach. Learn. Res.}
\bvolume{18}
(\byear{2017}),
\bfpage{48:1}--\blpage{48:33}.
\end{barticle}
\endbibitem

\bibitem{ref_17}
\begin{bchapter}
\bauthor{\binits{M.}~\bsnm{Jing}},
\bauthor{\binits{J.}~\bsnm{Li}},
\bauthor{\binits{J.}~\bsnm{Zhao}} and
\bauthor{\binits{K.}~\bsnm{Lu}},
\bctitle{Learning Distribution-Matched Landmarks for Unsupervised Domain
  Adaptation},
in: \bbtitle{DASFAA},
\byear{2018}.
\end{bchapter}
\endbibitem

\bibitem{ref_18}
\begin{botherref}
\oauthor{\binits{J.}~\bsnm{Wang}},
\oauthor{\binits{Y.}~\bsnm{Chen}},
\oauthor{\binits{S.}~\bsnm{Hao}},
\oauthor{\binits{W.}~\bsnm{Feng}} and
\oauthor{\binits{Z.}~\bsnm{Shen}},
Balanced Distribution Adaptation for Transfer Learning,
\textit{2017 IEEE International Conference on Data Mining (ICDM)}
(2017),
1129--1134.
\end{botherref}
\endbibitem

\bibitem{b37}
\begin{botherref}
\oauthor{\binits{E.}~\bsnm{Tzeng}},
\oauthor{\binits{J.}~\bsnm{Hoffman}},
\oauthor{\binits{N.}~\bsnm{Zhang}},
\oauthor{\binits{K.}~\bsnm{Saenko}} and
\oauthor{\binits{T.}~\bsnm{Darrell}},
Deep Domain Confusion: Maximizing for Domain Invariance,
\textit{ArXiv}
\textbf{abs/1412.3474}
(2014).
\end{botherref}
\endbibitem

\bibitem{b36}
\begin{bchapter}
\bauthor{\binits{A.}~\bsnm{Krizhevsky}},
\bauthor{\binits{I.}~\bsnm{Sutskever}} and
\bauthor{\binits{G.E.}~\bsnm{Hinton}},
\bctitle{ImageNet Classification with Deep Convolutional Neural Networks},
in: \bbtitle{NIPS},
\byear{2012}.
\end{bchapter}
\endbibitem

\bibitem{b38}
\begin{botherref}
\oauthor{\binits{M.}~\bsnm{Long}},
\oauthor{\binits{Y.}~\bsnm{Cao}},
\oauthor{\binits{J.}~\bsnm{Wang}} and
\oauthor{\binits{M.I.}~\bsnm{Jordan}},
Learning Transferable Features with Deep Adaptation Networks,
\textit{ICML}
(2015).
\end{botherref}
\endbibitem

\bibitem{b39}
\begin{bchapter}
\bauthor{\binits{B.}~\bsnm{Sun}} and
\bauthor{\binits{K.}~\bsnm{Saenko}},
\bctitle{Deep CORAL: Correlation Alignment for Deep Domain Adaptation},
in: \bbtitle{ECCV Workshops},
\byear{2016}.
\end{bchapter}
\endbibitem

\bibitem{DBLP:conf/iclr/ZellingerGLNS17}
\begin{bchapter}
\bauthor{\binits{W.}~\bsnm{Zellinger}},
\bauthor{\binits{T.}~\bsnm{Grubinger}},
\bauthor{\binits{E.}~\bsnm{Lughofer}},
\bauthor{\binits{T.}~\bsnm{Natschl{\"{a}}ger}} and
\bauthor{\binits{S.}~\bsnm{Saminger{-}Platz}},
\bctitle{Central Moment Discrepancy {(CMD)} for Domain-Invariant Representation
  Learning},
in: \bbtitle{5th International Conference on Learning Representations, {ICLR}
  2017, Toulon, France, April 24-26, 2017, Conference Track Proceedings},
\byear{2017}.
\url{https://openreview.net/forum?id=SkB-\_mcel}.
\end{bchapter}
\endbibitem

\bibitem{Goodfellow2014GenerativeAN}
\begin{botherref}
\oauthor{\binits{I.}~\bsnm{Goodfellow}},
\oauthor{\binits{J.}~\bsnm{Pouget-Abadie}},
\oauthor{\binits{M.}~\bsnm{Mirza}},
\oauthor{\binits{B.}~\bsnm{Xu}},
\oauthor{\binits{D.}~\bsnm{Warde-Farley}},
\oauthor{\binits{S.}~\bsnm{Ozair}},
\oauthor{\binits{A.C.}~\bsnm{Courville}} and
\oauthor{\binits{Y.}~\bsnm{Bengio}},
Generative Adversarial Networks,
\textit{ICLR}
(2014).
\end{botherref}
\endbibitem

\bibitem{nb18}
\begin{barticle}
\bauthor{\binits{Y.}~\bsnm{Ganin}},
\bauthor{\binits{E.}~\bsnm{Ustinova}},
\bauthor{\binits{H.}~\bsnm{Ajakan}},
\bauthor{\binits{P.}~\bsnm{Germain}},
\bauthor{\binits{H.}~\bsnm{Larochelle}},
\bauthor{\binits{F.}~\bsnm{Laviolette}},
\bauthor{\binits{M.}~\bsnm{Marchand}} and
\bauthor{\binits{V.}~\bsnm{Lempitsky}},
\batitle{Domain-Adversarial Training of Neural Networks},
\bjtitle{J. Mach. Learn. Res.}
\bvolume{17}
(\byear{2016}),
\bfpage{59:1}--\blpage{59:35}.
\end{barticle}
\endbibitem

\bibitem{nb33}
\begin{botherref}
\oauthor{\binits{E.}~\bsnm{Tzeng}},
\oauthor{\binits{J.}~\bsnm{Hoffman}},
\oauthor{\binits{K.}~\bsnm{Saenko}} and
\oauthor{\binits{T.}~\bsnm{Darrell}},
Adversarial Discriminative Domain Adaptation,
\textit{2017 IEEE Conference on Computer Vision and Pattern Recognition (CVPR)}
(2017),
2962--2971.
\end{botherref}
\endbibitem

\bibitem{nb24}
\begin{botherref}
\oauthor{\binits{K.}~\bsnm{Saito}},
\oauthor{\binits{K.}~\bsnm{Watanabe}},
\oauthor{\binits{Y.}~\bsnm{Ushiku}} and
\oauthor{\binits{T.}~\bsnm{Harada}},
Maximum Classifier Discrepancy for Unsupervised Domain Adaptation,
\textit{2018 IEEE/CVF Conference on Computer Vision and Pattern Recognition}
(2018),
3723--3732.
\end{botherref}
\endbibitem

\bibitem{b31}
\begin{bchapter}
\bauthor{\binits{K.}~\bsnm{Fukunaga}},
\bctitle{Introduction to Statistical Pattern Recognition},
\byear{1972}.
\end{bchapter}
\endbibitem

\bibitem{b32}
\begin{barticle}
\bauthor{\binits{S.}~\bsnm{Li}},
\bauthor{\binits{S.}~\bsnm{Song}},
\bauthor{\binits{G.}~\bsnm{Huang}},
\bauthor{\binits{Z.}~\bsnm{Ding}} and
\bauthor{\binits{C.}~\bsnm{Wu}},
\batitle{Domain Invariant and Class Discriminative Feature Learning for Visual
  Domain Adaptation},
\bjtitle{IEEE Transactions on Image Processing}
\bvolume{27}
(\byear{2018}),
\bfpage{4260}--\blpage{4273}.
\end{barticle}
\endbibitem

\bibitem{b34}
\begin{bchapter}
\bauthor{\binits{C.}~\bsnm{Chen}},
\bauthor{\binits{Z.}~\bsnm{Chen}},
\bauthor{\binits{B.}~\bsnm{Jiang}} and
\bauthor{\binits{X.}~\bsnm{Jin}},
\bctitle{Joint Domain Alignment and Discriminative Feature Learning for
  Unsupervised Deep Domain Adaptation},
in: \bbtitle{AAAI},
\byear{2019}.
\end{bchapter}
\endbibitem

\bibitem{ref_27}
\begin{bchapter}
\bauthor{\binits{O.}~\bsnm{Bousquet}},
\bctitle{Statistical Learning Theory},
\byear{2003}.
\end{bchapter}
\endbibitem

\bibitem{ref_28}
\begin{bchapter}
\bauthor{\binits{B.}~\bsnm{Sch{\"o}lkopf}},
\bauthor{\binits{R.}~\bsnm{Herbrich}} and
\bauthor{\binits{A.J.}~\bsnm{Smola}},
\bctitle{A Generalized Representer Theorem},
in: \bbtitle{COLT/EuroCOLT},
\byear{2001}.
\end{bchapter}
\endbibitem

\bibitem{ref_21}
\begin{botherref}
\oauthor{\binits{B.}~\bsnm{Gong}},
\oauthor{\binits{Y.}~\bsnm{Shi}},
\oauthor{\binits{F.}~\bsnm{Sha}} and
\oauthor{\binits{K.}~\bsnm{Grauman}},
Geodesic flow kernel for unsupervised domain adaptation,
\textit{2012 IEEE Conference on Computer Vision and Pattern Recognition}
(2012),
2066--2073.
\end{botherref}
\endbibitem

\bibitem{b35}
\begin{bchapter}
\bauthor{\binits{J.}~\bsnm{Donahue}},
\bauthor{\binits{Y.}~\bsnm{Jia}},
\bauthor{\binits{O.}~\bsnm{Vinyals}},
\bauthor{\binits{J.}~\bsnm{Hoffman}},
\bauthor{\binits{N.}~\bsnm{Zhang}},
\bauthor{\binits{E.}~\bsnm{Tzeng}} and
\bauthor{\binits{T.}~\bsnm{Darrell}},
\bctitle{DeCAF: A Deep Convolutional Activation Feature for Generic Visual
  Recognition},
in: \bbtitle{ICML},
\byear{2014}.
\end{bchapter}
\endbibitem

\bibitem{ref_23}
\begin{botherref}
\oauthor{\binits{M.}~\bsnm{Long}},
\oauthor{\binits{J.}~\bsnm{Wang}},
\oauthor{\binits{G.}~\bsnm{Ding}},
\oauthor{\binits{J.-G.}~\bsnm{Sun}} and
\oauthor{\binits{P.S.}~\bsnm{Yu}},
Transfer Joint Matching for Unsupervised Domain Adaptation,
\textit{2014 IEEE Conference on Computer Vision and Pattern Recognition}
(2014),
1410--1417.
\end{botherref}
\endbibitem

\bibitem{ref_25}
\begin{barticle}
\bauthor{\binits{M.}~\bsnm{Ghifary}},
\bauthor{\binits{D.}~\bsnm{Balduzzi}},
\bauthor{\binits{W.B.}~\bsnm{Kleijn}} and
\bauthor{\binits{M.}~\bsnm{Zhang}},
\batitle{Scatter Component Analysis: A Unified Framework for Domain Adaptation
  and Domain Generalization},
\bjtitle{IEEE Transactions on Pattern Analysis and Machine Intelligence}
\bvolume{39}
(\byear{2015}),
\bfpage{1414}--\blpage{1430}.
\end{barticle}
\endbibitem

\bibitem{Wang2019UnifyingUD}
\begin{botherref}
\oauthor{\binits{Q.}~\bsnm{Wang}},
\oauthor{\binits{P.}~\bsnm{Bu}} and
\oauthor{\binits{T.}~\bsnm{Breckon}},
Unifying Unsupervised Domain Adaptation and Zero-Shot Visual Recognition,
\textit{2019 International Joint Conference on Neural Networks (IJCNN)}
(2019),
1--8.
\end{botherref}
\endbibitem

\bibitem{Wu2017ACD}
\begin{botherref}
\oauthor{\binits{C.}~\bsnm{Wu}},
\oauthor{\binits{W.}~\bsnm{Wen}},
\oauthor{\binits{T.}~\bsnm{Afzal}},
\oauthor{\binits{Y.}~\bsnm{Zhang}},
\oauthor{\binits{Y.}~\bsnm{Chen}} and
\oauthor{\binits{H.H.}~\bsnm{Li}},
A Compact DNN: Approaching GoogLeNet-Level Accuracy of Classification and
  Domain Adaptation,
\textit{2017 IEEE Conference on Computer Vision and Pattern Recognition (CVPR)}
(2017),
761--770.
\end{botherref}
\endbibitem

\bibitem{ref_15}
\begin{barticle}
\bauthor{\binits{S.}~\bsnm{Ben-David}},
\bauthor{\binits{J.}~\bsnm{Blitzer}},
\bauthor{\binits{K.}~\bsnm{Crammer}},
\bauthor{\binits{A.}~\bsnm{Kulesza}},
\bauthor{\binits{F.C.}~\bsnm{Pereira}} and
\bauthor{\binits{J.W.}~\bsnm{Vaughan}},
\batitle{A theory of learning from different domains},
\bjtitle{Machine Learning}
\bvolume{79}
(\byear{2009}),
\bfpage{151}--\blpage{175}.
\end{barticle}
\endbibitem

\bibitem{ref_1}
\begin{botherref}
\oauthor{\binits{K.}~\bsnm{He}},
\oauthor{\binits{X.}~\bsnm{Zhang}},
\oauthor{\binits{S.}~\bsnm{Ren}} and
\oauthor{\binits{J.}~\bsnm{Sun}},
Deep Residual Learning for Image Recognition,
\textit{(CVPR)}
(2016),
770--778.
\end{botherref}
\endbibitem

\bibitem{ref_2}
\begin{bchapter}
\bauthor{\binits{N.}~\bsnm{Smith}} and
\bauthor{\binits{M.J.F.}~\bsnm{Gales}},
\bctitle{Speech Recognition using SVMs},
in: \bbtitle{NIPS},
\byear{2001}.
\end{bchapter}
\endbibitem

\bibitem{ref_3}
\begin{bchapter}
\bauthor{\binits{X.}~\bsnm{Glorot}},
\bauthor{\binits{A.}~\bsnm{Bordes}} and
\bauthor{\binits{Y.}~\bsnm{Bengio}},
\bctitle{Domain Adaptation for Large-Scale Sentiment Classification: A Deep
  Learning Approach},
in: \bbtitle{ICML},
\byear{2011}.
\end{bchapter}
\endbibitem

\bibitem{ref_4}
\begin{barticle}
\bauthor{\binits{C.S.}~\bsnm{Perone}},
\bauthor{\binits{P.L.}~\bsnm{Ballester}},
\bauthor{\binits{R.C.}~\bsnm{Barros}} and
\bauthor{\binits{J.}~\bsnm{Cohen-Adad}},
\batitle{Unsupervised domain adaptation for medical imaging segmentation with
  self-ensembling},
\bjtitle{NeuroImage}
\bvolume{194}
(\byear{2018}),
\bfpage{1}--\blpage{11}.
\end{barticle}
\endbibitem

\bibitem{ref_22}
\begin{bchapter}
\bauthor{\binits{G.S.}~\bsnm{Griffin}},
\bauthor{\binits{A.}~\bsnm{Holub}} and
\bauthor{\binits{P.}~\bsnm{Perona}},
\bctitle{Caltech-256 Object Category Dataset},
\byear{2007}.
\end{bchapter}
\endbibitem

\bibitem{ref_29}
\begin{botherref}
\oauthor{\binits{Q.}~\bsnm{Wang}} and
\oauthor{\binits{T.P.}~\bsnm{Breckon}},
Unsupervised Domain Adaptation via Structured Prediction Based Selective
  Pseudo-Labeling,
\textit{AAAI}
(2020).
\end{botherref}
\endbibitem

\end{thebibliography}

% or include bibliography directly:
%\begin{thebibliography}{0}
%\bibitem{r1} F. Author, Information about cited object.
%
%\bibitem{r2} S. Author and T. Author, Information about cited object.
%\end{thebibliography}

\appendix
\section{Proof of theorem 1}
According to \cite{ref_14}, we can approximate the  convex upper bound of the second $ ||{\beta^TKZ_0K^T\beta}||_F^2$ and the fourth $||{\beta^TKZ_cK^T\beta}||_F^2$ terms by using the following theorem:
\begin{theorem}
	given the constraint that ${\beta^TKHK^T\beta = I}$, the following inequality holds
	\begin{equation}
		||{\beta^T KZ_cK^T\beta}||_F^2 \leq \sigma k 	||{\beta^T KZ_cK^T}||_F^2 
	\end{equation}
	where $k$ is the feature dimensionality and $\sigma = ||({KHK})^{-\frac{1}{2}}||^2$. 
\end{theorem}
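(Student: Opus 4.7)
The plan is to apply elementary Frobenius submultiplicativity $\|XY\|_F^2 \le \|X\|_F^2\,\|Y\|_F^2$ (which follows row-by-row from Cauchy--Schwarz) with $X = \beta^T K Z_c K^T$ and $Y = \beta$. This immediately reduces the theorem to the single estimate $\|\beta\|_F^2 \le \sigma k$ under the constraint $\beta^T K H K^T \beta = I$; the remaining work is to convert the given trace-normalization constraint into a Frobenius bound on $\beta$ itself.

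To handle the constraint, I would introduce $A := (KHK)^{1/2}$, which makes sense because $H$ is symmetric and idempotent so $KHK \succeq 0$ (recall $K$ is symmetric, so $K^T = K$). Then define $\tilde{\beta} := A\beta$; the constraint becomes $\tilde{\beta}^T\tilde{\beta} = I$, i.e.\ $\tilde{\beta}$ has orthonormal columns, and $P := \tilde{\beta}\tilde{\beta}^T$ is an orthogonal projector of rank $k$ (the number of columns of $\beta$, i.e.\ the output/feature dimensionality). Writing $\beta = A^{-1}\tilde{\beta}$ gives
\begin{equation*}
\|\beta\|_F^2 \;=\; \mathrm{tr}(\beta^T\beta) \;=\; \mathrm{tr}\!\bigl(\tilde{\beta}^T A^{-2}\tilde{\beta}\bigr) \;=\; \mathrm{tr}(A^{-2} P).
\end{equation*}
Since $A^{-2}$ is PSD with $\|A^{-2}\|_{\mathrm{op}} = \|(KHK)^{-1/2}\|^2 = \sigma$, diagonalizing $A^{-2} = \sum_i \mu_i u_i u_i^T$ in an orthonormal eigenbasis $\{u_i\}$ and using $u_i^T P u_i \in [0,1]$ with $\sum_i u_i^T P u_i = \mathrm{tr}(P) = k$ yields $\mathrm{tr}(A^{-2}P) = \sum_i \mu_i\, u_i^T P u_i \le \sigma \cdot k$.

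Combining the two estimates gives $\|\beta^T K Z_c K^T \beta\|_F^2 \le \|\beta^T K Z_c K^T\|_F^2 \cdot \sigma k$, as claimed. The one genuine subtlety is that the centering matrix $H$ has rank $n_s + n_t - 1$, so $KHK$ is singular and $(KHK)^{-1/2}$ must be read as a pseudo-inverse square root on the orthogonal complement of the null space; the constraint $\beta^T K H K \beta = I$ already forces $K\beta$ to avoid that null space, so one can work on the range of $H$ throughout. This convention follows the cited MMCD / JDA line of work and is the main technical point one must be careful about — the rest is a clean Frobenius-vs-operator-norm manipulation.
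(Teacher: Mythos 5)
Your proof is correct and follows essentially the same route as the paper's: both hinge on the factorization $\beta=(KHK)^{-1/2}(KHK)^{1/2}\beta$, the constraint giving $\|(KHK)^{1/2}\beta\|_F^2=\mathrm{tr}(I_k)=k$, and Frobenius submultiplicativity (Cauchy--Schwarz) --- you merely peel off $\beta$ first and then bound $\|\beta\|_F^2\le\sigma k$, whereas the paper inserts $(KHK^T)^{-1/2}(KHK^T)^{1/2}$ in situ and applies the inequality twice. Two minor points in your favour: reading $\sigma$ as the squared operator norm yields a marginally sharper constant than the paper's proof, which actually uses $\|(KHK^T)^{-1/2}\|_F^2$ in its final step; and you are right to flag the singularity of the centering matrix $H$ --- the paper's proof silently treats $(KHK^T)^{-1/2}(KHK^T)^{1/2}$ as the identity, which requires exactly the pseudo-inverse convention you describe.
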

\begin{proof}
    \begin{equation*}
        \begin{split}
    ||{\beta^T KZ_cK^T\beta}||_F^2 &= ||{\beta^TKZ_cK^T(KHK^T)^{-1/2}(KHK^T)^{1/2}\beta}||_F^2  \\
    & \leq ||{\beta^TKZ_cK^T(KHK^T)^{-1/2}}||_F^2 ||{(KHK^T)^{1/2}\beta}||_F^2 \\
    & = k||{\beta^TKZ_cK^T(KHK^T)^{-1/2}}||_F^2 \\
    & \leq k||{\beta^TKZ_cK^T}||_F^2 ||{(KHK^T)^{-1/2}} ||_F^2 \\
    & = k\sigma  ||
    {\beta^TKZ_cK^T|}|_F^2 
        \end{split} 
	\end{equation*}
\end{proof}

The first equation holds because $KHK^T$ is semi-definite positive, while the second and the fourth inequalities follow the Cauchy-Schwarz inequality. In terms of the constraint of theorem 1, $k = \mathrm{tr}({\beta^TKHK^T\beta}) = \mathrm{tr}({I_k})$.

\end{document}